\PassOptionsToPackage{dvipsnames,svgnames}{xcolor}
\documentclass[sigconf,nonacm]{acmart}
\AtBeginDocument{%
  }

\usepackage[utf8]{inputenc}
\usepackage[T1]{fontenc}
\usepackage{amsmath}
\usepackage{amsthm}
\usepackage{amsfonts}
\usepackage{booktabs}
\usepackage{multirow}
\usepackage{adjustbox}
\usepackage{rotating}
\usepackage{thm-restate}
\usepackage{enumitem}
\usepackage{subcaption}
\usepackage[linesnumbered, ruled, vlined]{algorithm2e}
\DontPrintSemicolon
\SetKwInOut{Input}{Input}
\SetKwInOut{Output}{Output}
\SetKwProg{function}{Function}{:}{}
\SetKwProg{parfor}{parallel for}{\:do}{}

\SetVlineSkip{0.2pt}        
\SetInd{0.7em}{0.7em}     

\usepackage{graphicx}
\usepackage{enumitem}
\usepackage{textgreek}
\usepackage{tabularx}
\usepackage{xspace}

\definecolor{ForestGreen}{rgb}{0.0333,0.4451,0.0333}
\definecolor{DarkRed}{rgb}{0.65,0,0}
\definecolor{Red}{rgb}{1,0,0}
\hypersetup{
    unicode=false,          
    colorlinks=true,        
    linkcolor=BrickRed,          
    citecolor=OliveGreen,        
    filecolor=magenta,      
    urlcolor=cyan           
}

\newtheorem{theorem}{Theorem}

\newtheorem{definition}{Definition}
\newtheorem{observation}{Observation}

\makeatletter
\newcommand{\multiline}[1]{%
  \begin{tabularx}{\dimexpr\linewidth-\ALG@thistlm}[t]{@{}X@{}}
    #1
  \end{tabularx}
}
\makeatother

\newcommand{\myparagraph}[1]{\vspace{.03in}\noindent {\bf #1.}}

\DeclareMathOperator*{\argmin}{arg\,min}

\newcommand{\R}{\ensuremath{\mathbb{R}}}
\newcommand{\Ch}{\ensuremath{\mathtt{Ch}}\xspace}
\newcommand{\best}[1]{{\color{Green}\underline{\textbf{#1}}}}
\newcommand{\better}[1]{{\color{Blue} \textbf{#1}}}

\def \eps {\varepsilon\xspace}

\def \cC {\mathcal{C}\xspace}

\def \cL {\mathcal{L}\xspace}

\newcommand{\NN}{{\sf NN}\xspace}
\newcommand{\hacnn}{{\sf HAC-NN}\xspace}
\newcommand{\merge}{{\sf Merge}\xspace}

\newcommand{\emp}[1]{\emph{\textbf{#1}}} 

\begin{document}

\title{Chamfer-Linkage for Hierarchical Agglomerative Clustering}

\author{Kishen N Gowda}
\affiliation{%
  \institution{University of Maryland}
  \city{College Park}
  \country{USA}
}

\author{Willem Fletcher}
\affiliation{%
  \institution{Brown University}
  \city{Providence}
  \country{USA}
}

\author{MohammadHossein Bateni}
\affiliation{%
  \institution{Google Research}
  \city{New York}
  \country{USA}
}

\author{Laxman Dhulipala}
\affiliation{%
  \institution{University of Maryland}
  \city{College Park}
  \country{USA}
}

\author{D Ellis Hershkowitz}
\affiliation{%
  \institution{Brown University}
  \city{Providence}
  \country{USA}
}

\author{Rajesh Jayaram}
\affiliation{%
  \institution{Google Research}
  \city{New York}
  \country{USA}
}

\author{Jakub Łącki}
\affiliation{%
  \institution{Google Research}
  \city{New York}
  \country{USA}
}

\begin{abstract}
  Hierarchical Agglomerative Clustering (HAC) is a widely-used clustering method based on repeatedly merging the closest pair of clusters, where inter-cluster distances are determined by a linkage function. Unlike many clustering methods, HAC does not optimize a single explicit global objective; clustering quality is therefore primarily evaluated empirically, and the choice of linkage function plays a crucial role in practice. However, popular classical linkages, such as single-linkage, average-linkage and Ward's method show high variability across real-world datasets and do not consistently produce high-quality clusterings in practice. 
  
  In this paper, we propose \emph{Chamfer-linkage}, a novel linkage function that measures the distance between clusters using the Chamfer distance, a popular notion of distance between point-clouds in machine learning and computer vision. We argue that Chamfer-linkage satisfies desirable concept representation properties that other popular measures struggle to satisfy. Theoretically, we show that Chamfer-linkage HAC can be implemented in $O(n^2)$ time, matching the efficiency of classical linkage functions. Experimentally, we find that Chamfer-linkage consistently yields higher-quality clusterings than classical linkages such as average-linkage and Ward's method across a diverse collection of datasets. Our results establish Chamfer-linkage as a practical drop-in replacement for classical linkage functions, broadening the toolkit for hierarchical clustering in both theory and practice.
\end{abstract}

\keywords{Hierarchical Agglomerative Clustering, HAC, linkage function, Chamfer distance}

\maketitle

\section{Introduction}\label{sec:intro}

Hierarchical Agglomerative Clustering (HAC) is a fundamental unsupervised learning method that is widely used due to its ability to produce high quality clusterings that reveal multi-scale structure in data.
The algorithm follows a simple, greedy, bottom-up procedure: starting with each data point as a singleton cluster, it iteratively merges the two \emph{closest} clusters until only a single cluster remains, thereby producing a \emph{dendrogram} that represents a hierarchy of groupings. 
Since its introduction nearly a century ago, HAC has seen significant work both on the theoretical~\cite{schlee1975numerical, lance1967general, king1967step, tsenghac} and practical~\cite{dhulipala2022hierarchical, dhulipala2021hierarchical, dhulipala2023terahac} side, is included in a wide variety of data science libraries including SciPy~\cite{virtanen2020scipy}, scikit-learn~\cite{pedregosa2011scikit}, and fastcluster~\cite{mullner2013fastcluster}, and has been widely applied throughout the sciences, e.g., in computational biology~\cite{hua2017mgupgma, stefan1996multiple}, document clustering~\cite{zhao2002evaluation}, social network analysis~\cite{blundell2013bayesian}, and many other application areas.

When applying the merging procedure in HAC, a fundamental question is how to measure the closeness of clusters. Cluster closeness is quantified by a ``linkage function'' and classical choices of linkage function include \emph{single-linkage} (inter-cluster distance is the distance between the closest points in the clusters), \emph{complete-linkage} (distance is the distance between the farthest points in the clusters), \emph{average-linkage} (the distance is the average distance between points in both clusters), and \emph{Ward's linkage} (the distance is proportional to the increase in the sum-of-squares costs to the centroid of the merged cluster).

Unlike many clustering methods, HAC does not optimize a single explicit global objective; instead, its effectiveness depends critically on the choice of linkage function and is therefore evaluated empirically.
Although these classical linkage functions are widely used and frequently recommended to practitioners---e.g., in the aforementioned data science toolkits---their clustering quality on real-world datasets is highly variable.
For example, our empirical evaluation indicates that no existing classical linkage function is able to consistently achieve within \textbf{29\%} of the best clustering quality (measured in terms of \emph{Adjusted Rand Index (ARI)}) achieved by any classical linkage across all 18 diverse datasets we study; see Section~\ref{sec:qual_evals}.
Given this situation, a natural question is: 

\begin{center}
\emp{What is the right linkage function for HAC?}
\end{center}

Ideally, a linkage function would admit certain natural properties that make its clustering decisions more intuitive, and potentially also improves quality.
One natural desideratum is producing \emph{balanced dendrograms}, since long chains of cluster merges are unusual in real-world datasets.
Another desirable property is capturing \emph{concept representation}, i.e., the inter-cluster distances should faithfully capture whether the ``concepts'' (i.e., the points) present in one cluster are well-represented within the other.
This is an attractive property since, for instance, when merging clusters $A$ and $B$ in a hierarchical clustering, the cluster being merged into (say $B$) is rarely a single cluster---it contains a number of related sub-clusters. Instead of looking at how similar $A$ is to \emph{all} concepts in $B$, it's sufficient for $A$ to be close to one of $B$'s sub-clusters.

Overly simple schemes like single- or complete-linkage myopically focus on only the nearest or furthest inter-cluster distances, and can thus miss cluster merges that satisfy concept representation, or perform merges that significantly violate it.
On the other hand, average-linkage considers all-pairs distances which can lead to over-weighting the lack of concept representation for far-away (noise) points.
This raises the following question: 

\begin{center}
    \emp{Can we design a linkage function that captures concept representation, while also performing consistently well across real-world datasets?}
\end{center}

In this paper, we propose and study \emph{Chamfer-linkage}, which measures the distance between clusters as 
\begin{align*}
    \Ch(A,B):= \sum_{a\in A} \min_{b\in B} d(a,b),
\end{align*}
where $d(a,b)$ is the underlying distance metric, e.g., the Euclidean distance.
In plain words, $\Ch(A,B)$ sums for each point in $A$, the distance to the closest point in $B$ (note that $\Ch(A,B)$ can be different than $\Ch(B, A)$, in contrast to the symmetry of the other linkage functions\footnote{Note that one can naturally symmetrize Chamfer-linkage by considering $\min\{\Ch(A,B),\Ch(B,A)\}$ as the distance between clusters $A$ and $B$. However, this yields the same merge decisions as viewing HAC as minimizing $\Ch(\cdot,\cdot)$ over ordered pairs. We nevertheless use the asymmetric viewpoint since it provides the cleanest intuition for concept representation and for how merge decisions are guided.}). 
Chamfer distance has a rich history in computer vision and machine learning as a measure of dissimilarity between point clouds~\cite{kusner2015word, wan2019transductive, sudderth2004visual, fan2017point, jiang2018gal} where it is often used as a fast proxy for the more computationally expensive Earth-Mover distance~\cite{kusner2015word}.
More recently, Chamfer distance has served as the basis for state-of-the-art multivector embedding models such as ColBERT, ColPali, and related models~\cite{khattab2020colbert, jayaram2024muvera, faysse2024colpali, santhanam2022plaid}.
Chamfer distance is thus a natural linkage function, but surprisingly, Chamfer-linkage HAC appears unstudied in prior work.

A nice property of Chamfer-linkage is that it naturally satisfies concept representation.
In particular, $\Ch(A,B)$ being small captures the fact that for each point in $A$, there exists a nearby representative in $B$, so that the concepts in $A$ can be matched to concepts present in $B$ with low total cost.
On the other hand, worrying about how spread out the cluster $A\cup B$ will become may not be as important, but this is the consideration made by average-linkage and other classical linkages like complete-linkage.
Furthermore, the underlying asymmetry in Chamfer tends to prioritize merges where one cluster is well-represented within another, which discourages small clusters from persisting until late in the process and can lead to more balanced dendrograms in practice (which we observe empirically in Section~\ref{sec:experiments}).

In this work, we show that Chamfer-linkage’s emphasis on concept representation yields strong and consistent empirical performance in practice, while also being efficiently computable.

\myparagraph{Theoretical Results}
It is not difficult to give a naive implementation of Chamfer-linkage HAC that runs in $O(n^3)$ time, and looking at the definition of $\Ch(A,B)$ one may reasonably wonder if this is the best possible.
We show that Chamfer-linkage satisfies certain properties, which, when used in conjunction with carefully chosen data structures, allow us to compute Chamfer-linkage HAC significantly faster, in $O(n^2)$ time.
Thus, our result shows that Chamfer-linkage HAC can be computed as quickly as any of the classical linkage functions (most of which admit $O(n^2)$ time algorithms using the nearest-neighbor chain method~\cite{benzecri, irbook}). 
Moreover, this quadratic running time is essentially the best possible for exact HAC under standard fine-grained complexity assumptions~\cite{abboudhac}.
We also show a novel space-time tradeoff for Chamfer-linkage HAC, showing that we can reduce the space usage to $O(n^2 / t)$ at the cost of increasing the running time to $O(n^2 t)$, for any parameter $t \in [1, n]$.

\myparagraph{Empirical Results}
Finally, we design a fast C++ implementation of Chamfer-linkage HAC and compare our implementation with implementations of five classical linkages, as well as three other natural variations on Chamfer-linkage HAC.
Along the way, we also produce faster implementations of these classical linkage functions, which significantly speed up existing (optimized) versions in popular packages like {\tt fastcluster}~\cite{mullner2013fastcluster} and {\tt scikit-learn}~\cite{pedregosa2011scikit} by up to \textbf{5.75--9.28}{\boldmath$\times$} respectively.
Our experiments show that Chamfer-linkage consistently outperforms all classical linkage functions in clustering quality---achieving up to \textbf{57\%} improvement in \emph{Adjusted Rand Index (ARI)} scores over the best baseline, \textbf{6\%} on average, and never more than \textbf{8\%} worse---while also producing well-balanced dendrograms with low height.
We will open-source all of our new implementations (along with easy-to-use Python bindings) upon publication.

In summary, our contributions are:
\begin{itemize}[topsep=0pt,itemsep=0pt,parsep=0pt,leftmargin=15pt]
\item We introduce \emph{Chamfer-linkage}, a new linkage function for HAC motivated by concept representation and balanced hierarchies.
\item We give an exact $O(n^2)$-time algorithm for Chamfer-linkage HAC (matching the best possible worst-case guarantees for exact HAC under standard fine-grained assumptions), along with a space--time tradeoff that uses $O(n^2/t)$ space and $O(n^2 t)$ time.
\item We provide fast implementations and show empirically that Chamfer-linkage achieves consistently strong clustering quality while producing well-balanced dendrograms in practice.
\end{itemize}

\section{Preliminaries}\label{sec:prelims}

\begin{figure}[t]
\vspace{-2em}
    \centering
    \begin{subfigure}[b]{0.19\textwidth}
        \centering
        \includegraphics[width=\textwidth, trim=0mm 20mm 140mm 10mm, clip]{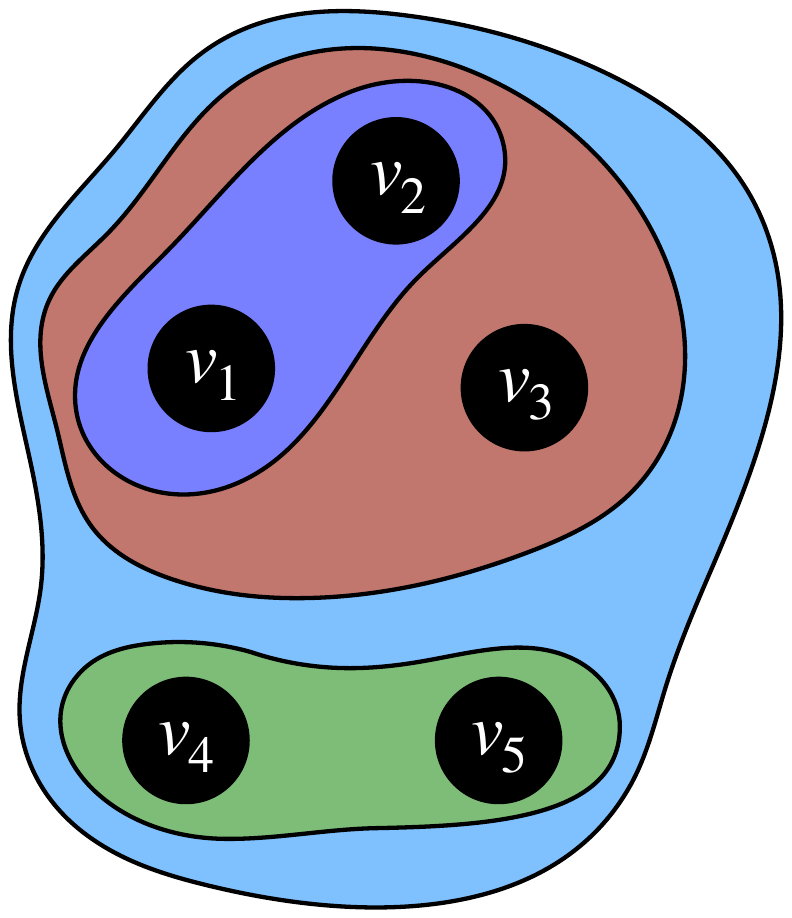}
        \caption{HAC Clustering.}\label{sfig:hac2}
    \end{subfigure}  \hspace{3.5em}
    \begin{subfigure}[b]{0.19\textwidth}
        \centering
        \includegraphics[width=\textwidth,trim=0mm 20mm 125mm 10mm, clip]{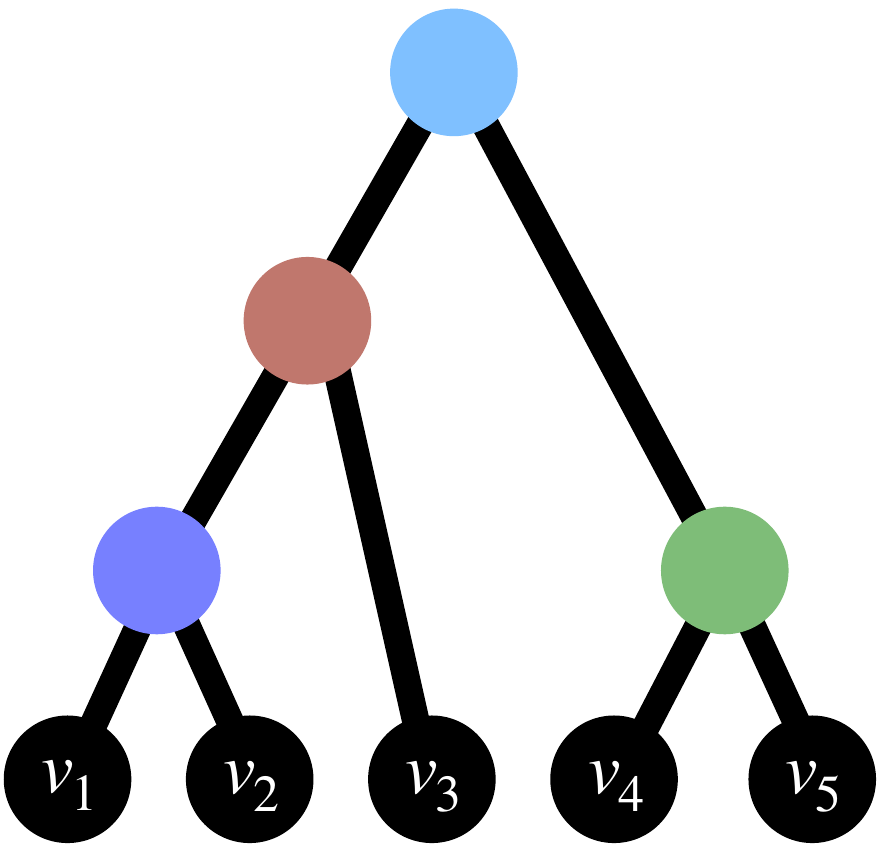}
        \caption{Dendrogram.}\label{sfig:hac3}
    \end{subfigure}   
    \vspace{-0.25em}
    \caption{An example of HAC.} \label{fig:HAC}
\end{figure}

\myparagraph{Hierarchical Agglomerative Clustering (HAC)}
Given a set of points $P \subset \R^d$, and a linkage function $\cL$, HAC begins by placing each point in its own cluster and maintaining a set of current clusters $\cC$; we refer to them as \emph{active} clusters. In each iteration, the algorithm selects the pair of active clusters $A,B \in \cC$ that minimizes the value $\cL(A,B)$, merges them, and updates $\cC\gets \cC\setminus \{A,B\}\cup \{A\cup B\}$. This process continues until only a single cluster remains.

The output of HAC is a \emph{dendrogram}: a rooted binary tree representing the sequence of merges. The leaves correspond to the input points, and each internal node corresponds to the cluster formed by merging the two clusters represented by its children during the execution. See Figure~\ref{fig:HAC}.

HAC can be implemented in various ways. Two widely used methods are the nearest-neighbor chain algorithm~\cite{benzecri, irbook} and the heap-based approach~\cite{irbook}. In this paper, we use a more direct implementation that we refer to as \hacnn. In this approach, each active cluster explicitly maintains its nearest neighbor. In each iteration, the algorithm queries the nearest neighbor of every active cluster, merges the best pair, and updates the nearest neighbors of all clusters that have their nearest-neighbor change.

Classical linkage functions like single-, complete-, and average-linkage can be implemented in the \hacnn framework in $O(n^2\log n)$ time and $O(n^2)$ space using \emph{augmented maps}. Augmented maps are ordered map data structures supporting insertions and deletions in $O(\log n)$ time, and also allow range queries using auxiliary \emph{augmented values}. In our setting, we use augmented values to maintain the minimum-valued entry, which enables retrieval of the minimum, i.e., nearest neighbor, in $O(1)$ time. See~\cite{Sun2018PAM} for more details about augmented maps. See Algorithm~\ref{alg:HAC_NN} for the full description; the \merge subroutine here encapsulates linkage-specific logic and the updates to the nearest neighbors. 

In Section~\ref{sec:quadratic-time}, we show that Chamfer-linkage can be implemented in $O(n^2)$ time using the \hacnn framework.

\IncMargin{1.2em}
\begin{algorithm}[t]
\caption{The \hacnn framework.}\label{alg:HAC_NN}
\Input{Set of points $P\subset \R^d$}
\Output{Dendrogram}
\function{\hacnn}{
    $\cC \gets \{\{p\}:p\in P\}$\\
    \For{$U \in \cC$}{
        $\NN(U) \gets \argmin_{V\in \cC\setminus \{U\}} \cL(U,V)$\\
    }
    \While{$|\cC|>1$}{ 
        $A \gets \argmin_{X\in \cC}\cL(X,\NN(X))$\\
        $B \gets \NN(A)$\\
        $\merge(A,B,\NN)$\\
        $\cC \gets \cC\setminus \{A,B\}\cup \{A\cup B\}$
    }
}
\end{algorithm}
\DecMargin{1.2em}

\myparagraph{Chamfer distance}
Next, we define the Chamfer distance, and some variants that we study in this paper.

\begin{definition}[Chamfer Distance]
Let $A$ and $B$ denote two sets of points, where distance between individual points is given by a metric $d$. Then, the (asymmetric) Chamfer distance from $A$ to $B$ is defined as,
\begin{align*}
    \Ch(A,B):= \sum_{a\in A} \min_{b\in B} d(a,b).
\end{align*}
\end{definition}

We also consider the following natural variants of the Chamfer distance in this paper:
\begin{itemize}[topsep=0pt,itemsep=0pt,parsep=0pt,leftmargin=15pt]
    \item Normalized Chamfer distance: $$\Ch_N(A,B) := \Ch(A,B)/|A|,$$
    \item Symmetric Chamfer distance: $$\Ch_S(A,B) := \Ch(A,B) + \Ch(B,A),$$
    \item Symmetric and Normalized: 
    $$\Ch_{NS}(A,B) := \Ch_N(A,B) + \Ch_N(B,A).$$
\end{itemize}
It is easy to verify that none of the variants of the Chamfer distance satisfy the triangle inequality, and are thus not a metric.
We note that the aforementioned asymmetric distances ($\Ch$ and $\Ch_N$) when used as a linkage function is equivalent to the symmetric function that combines the two directions using $\min$, i.e., $\min(\Ch(A,B), \Ch(B, A))$. We nevertheless use the asymmetric viewpoint throughout the paper since it provides the cleanest intuition for our algorithmic properties and update rules.

\myparagraph{Properties of Chamfer-linkage}
Most linkage functions $\cL$ satisfy the useful property of \emph{reducibility},
i.e., for any clusters $A,B,C$, we have that $\cL(A, B \cup C) \geq \min(\cL(A, B), \cL(A,C))$.
Single-, complete-, average-, and Ward's linkages satisfy reducibility, while centroid linkage is not a reducible linkage.

\begin{observation}
    None of the variants of Chamfer-linkage satisfy reducibility.
\end{observation}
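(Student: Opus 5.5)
The plan is to refute reducibility for each of the four variants $\Ch$, $\Ch_N$, $\Ch_S$, $\Ch_{NS}$ with an explicit small point set on the real line, with $d(x,y)=|x-y|$. The failure comes from one observation. For any $A,B,C$,
\[
\Ch(A,B\cup C)=\sum_{a\in A}\min\bigl(\min_{b\in B}d(a,b),\min_{c\in C}d(a,c)\bigr)\le \min\bigl(\Ch(A,B),\Ch(A,C)\bigr),
\]
and the inequality is strict when some points of $A$ are served by $B$ and others by $C$. So I will take $A$ with two points, one near $B$ and one near $C$. Because the paper treats the asymmetric linkages through $\min\{\Ch(A,B),\Ch(B,A)\}$, the counterexample must also control the reverse direction.

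\textbf{Symmetric variants.} Take $A=\{0,10\}$, $B=\{1\}$, $C=\{9\}$. The quantities are:
\begin{itemize}
\item $\Ch(A,B)=\Ch(A,C)=10$ and $\Ch(B,A)=\Ch(C,A)=1$;
\item $\Ch(A,B\cup C)=2$ and $\Ch(B\cup C,A)=2$.
\end{itemize}
For $\Ch_S$ this gives $\Ch_S(A,B)=\Ch_S(A,C)=11$, while $\Ch_S(A,B\cup C)=4$.
For $\Ch_{NS}$ it gives $\Ch_{NS}(A,B)=\Ch_{NS}(A,C)=5+1=6$, while $\Ch_{NS}(A,B\cup C)=1+1=2$.
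Both violate $\cL(A,B\cup C)\ge\min(\cL(A,B),\cL(A,C))$.

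\textbf{Asymmetric variants.} This example fails for $\Ch$ and $\Ch_N$ under the min-symmetrization, because the singletons $B$ and $C$ are cheaply represented inside $A$. So I will pad $B$ and $C$ with far-away points so that the reverse directions become expensive. Take $A=\{0,10\}$, $B=\{1,100\}$, $C=\{9,-100\}$. Then:
\begin{itemize}
\item $\Ch(A,B)=\Ch(A,C)=10$, with $\Ch(B,A)=91$ and $\Ch(C,A)=101$, so the min-symmetrized linkages are $10$ (unnormalized) and $5$ (normalized);
\item $\Ch(A,B\cup C)=2$ while $\Ch(B\cup C,A)=192$, so the linkage to the union is $2$ (respectively $1$ after normalizing).
\end{itemize}
Again reducibility fails. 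Under the purely asymmetric ordered-pair reading, the first example already suffices.

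The only delicate point is this interplay with the reverse direction under min-symmetrization, which the padding handles. The rest is routine arithmetic to verify.
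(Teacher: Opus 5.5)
Your proposal is correct, and all of the arithmetic checks out. The paper's only justification is the six-point circle of Figure~\ref{fig:non_reducible}. There, three two-point clusters give $\Ch(\text{purple},\text{blue})<\min\{\Ch(\text{purple},\text{red}),\Ch(\text{purple},\text{green})\}$, because each purple point has a circle-neighbor in a different merged cluster. That is the same mechanism you isolate: the strict case of Observation~\ref{obs:min-monotone}, with the two points of $A$ served by different parts of $B\cup C$. However, the paper asserts it only for the ordered-pair reading of $\Ch$ (which gives $\Ch_N$ after dividing by $|\text{purple}|$). It leaves the other variants implicit.

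Your write-up is more complete exactly where care is needed.
\begin{itemize}
\item For $\Ch_S$, the reverse term $\Ch(B\cup C,A)=\Ch(B,A)+\Ch(C,A)$ accumulates, and the hexagon cannot absorb it. With side length $s$, every pairwise distance lies in $[s,2s]$. So for any split of the six points into three pairs, $\Ch_S(A,B\cup C)\ge 2s+\Ch(B,A)+2s\ge \Ch(A,B)+\Ch(B,A)=\Ch_S(A,B)$. Your choice of singletons $B,C$ sitting right next to $A$ is what makes the symmetric violation possible.
\item Your padding is genuinely needed for the min-symmetrized reading of $\Ch$ and $\Ch_N$. Your first example satisfies reducibility there (e.g.\ $2\ge 1$ for $\Ch$). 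The paper itself mentions this reading as an equivalent view of the asymmetric linkage. It is also the natural reading for the nearest-neighbor-chain remark, since that algorithm works with a symmetric linkage.
\end{itemize}

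The paper's picture buys brevity and geometric intuition. Yours buys an actual verification for all four variants under both readings.
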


\begin{figure}[t]
    \centering
    \includegraphics[width=0.28\textwidth]{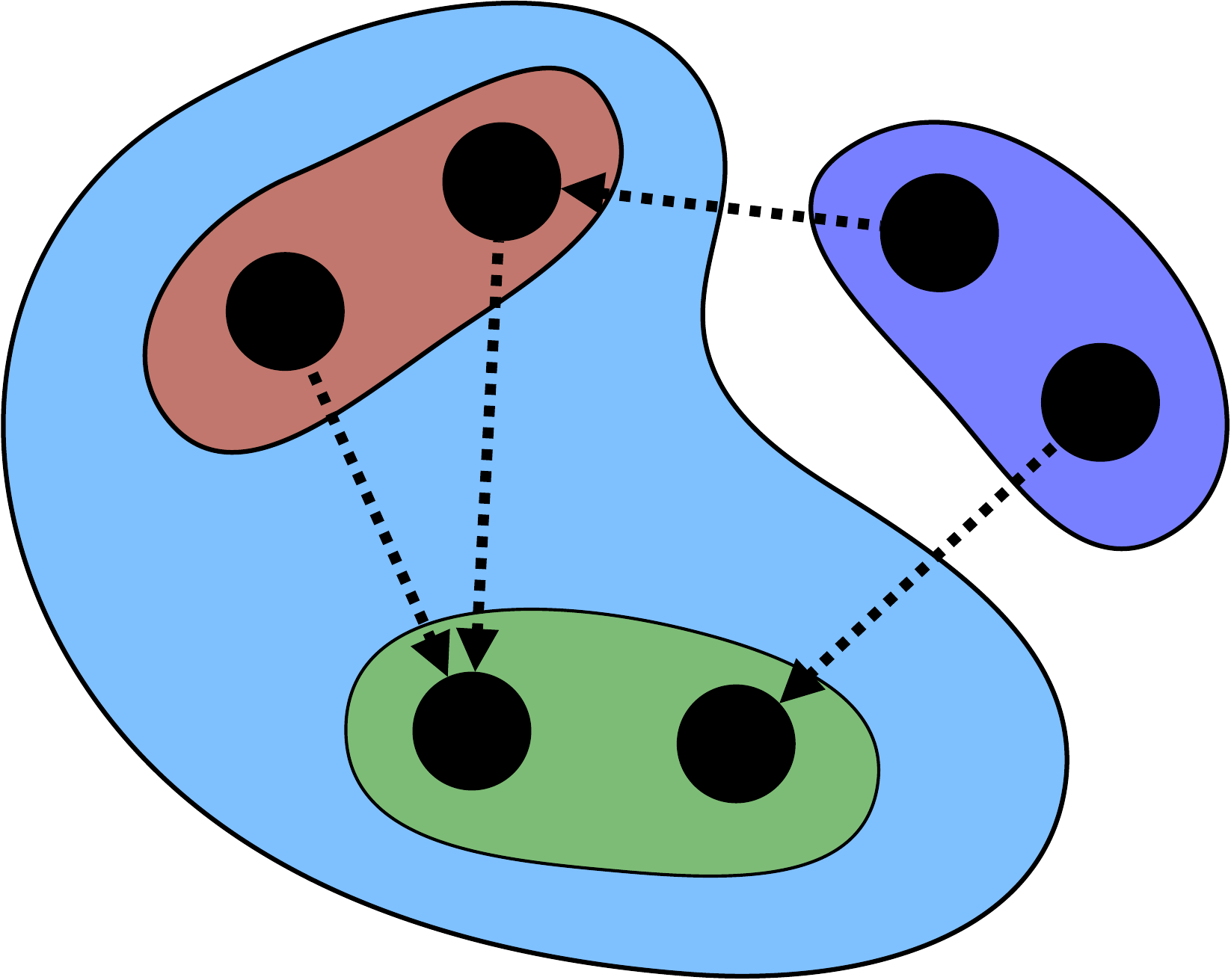}
    \caption{\textbf{Non-reducibility:} Consider six points equally spaced along a circle. They are partitioned into three clusters of points (red, green, and purple clusters in the figure). When the red and green clusters merge, they create a blue cluster. But $\Ch(\text{purple}, \text{blue}) < \min\{ \Ch(\text{purple}, \text{red}), \Ch(\text{purple}, \text{green})  \}$ \label{fig:non_reducible} since both vertices of the purple cluster have a neighbor (along the circle) in the blue cluster.}
\end{figure}

Note that the lack of reducibility prevents the nearest-neighbor chain algorithm~\cite{benzecri} from working since the new cluster to merge can be another cluster not contained in the stack of clusters traversed by the nearest-neighbor chain algorithm. We note that the nearest-neighbor chain technique is the main method for obtaining $O(n^2)$ time HAC implementations in the literature.

\section{Quadratic Time Algorithm}\label{sec:quadratic-time}

We now describe how to implement Chamfer-linkage HAC in $O(n^2)$ time using the \hacnn framework. We begin with a key property satisfied by Chamfer-linkage:

\begin{observation}\label{obs:min-monotone}
    Let $A,B$, and $C$ be disjoint clusters. Then,
    \begin{align*}
        \Ch(C, A \cup B) \le \min\{\Ch(C,A), \Ch(C,B)\}.
    \end{align*}
\end{observation}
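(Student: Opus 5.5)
The plan is to argue termwise, directly from the definition of $\Ch$. The key point is that the inner minimum in $\Ch(C,\cdot)$ can only decrease when the target set grows. For every fixed $c \in C$, since $A \subseteq A \cup B$, the minimum over a larger set is no larger than the minimum over a subset, so
\begin{align*}
\min_{x \in A \cup B} d(c,x) \le \min_{a \in A} d(c,a).
\end{align*}
By the same reasoning with $B \subseteq A\cup B$, we get $\min_{x \in A \cup B} d(c,x) \le \min_{b \in B} d(c,b)$.

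Next, sum each of these pointwise inequalities over all $c \in C$. This gives $\Ch(C, A\cup B) \le \Ch(C,A)$ and $\Ch(C, A\cup B) \le \Ch(C,B)$. A quantity that is at most each of two numbers is at most their minimum, which yields the claimed bound $\Ch(C,A\cup B) \le \min\{\Ch(C,A),\Ch(C,B)\}$.

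There is no real obstacle here. The only points worth stating explicitly are the following. First, the clusters are finite and nonempty, so each minimum is attained and well defined. Second, disjointness of $A,B,C$ is not actually needed; it is assumed only to match the HAC setting. Third, the normalization by $|C|$ in $\Ch_N$ does not depend on the second argument, so the same inequality holds for $\Ch_N$ by dividing through by $|C|$.

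One contrast is worth noting. The inequality concerns the \emph{first} argument being fixed while the second grows. This is exactly the opposite direction from reducibility, which would require $\ge$, and that explains why the two facts are consistent with the preceding non-reducibility observation.
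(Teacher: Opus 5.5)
Your proof is correct and follows the paper's own argument: for each $c \in C$, the minimum distance over $A \cup B$ is at most the minimum over $A$ and at most the minimum over $B$, and summing over $c \in C$ gives the bound. Your side remarks are also correct: disjointness is not needed, and the inequality carries over to $\Ch_N$ after dividing by $|C|$.
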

This follows directly from the definition: for each point in $C$, the closest point in $A\cup B$ is at least as close as the closest point in $A$ or $B$ alone. See Figure~\ref{fig:Decreasing_Ch}.

\begin{figure}[t]
\vspace{-2em}
    \centering
    \begin{subfigure}[b]{0.19\textwidth}
        \centering
        \includegraphics[width=\textwidth, trim=0mm 30mm 0mm 0mm, clip]{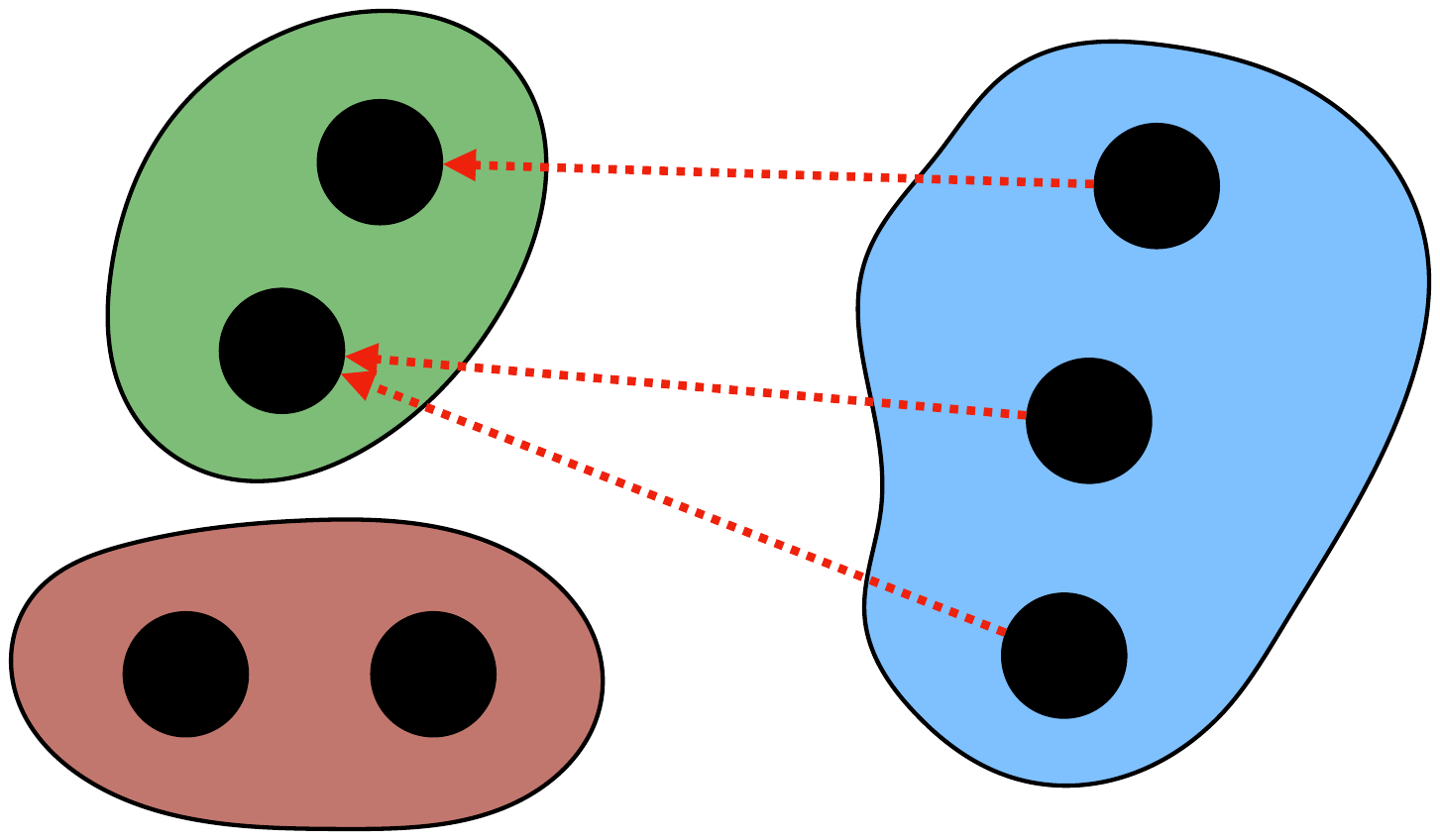}
        \caption{Before Merge.}\label{sfig:ch_non-monotone1}
    \end{subfigure}  \hspace{3.5em}
    \begin{subfigure}[b]{0.19\textwidth}
        \centering
        \includegraphics[width=\textwidth,trim=0mm 30mm 0mm 0mm, clip]{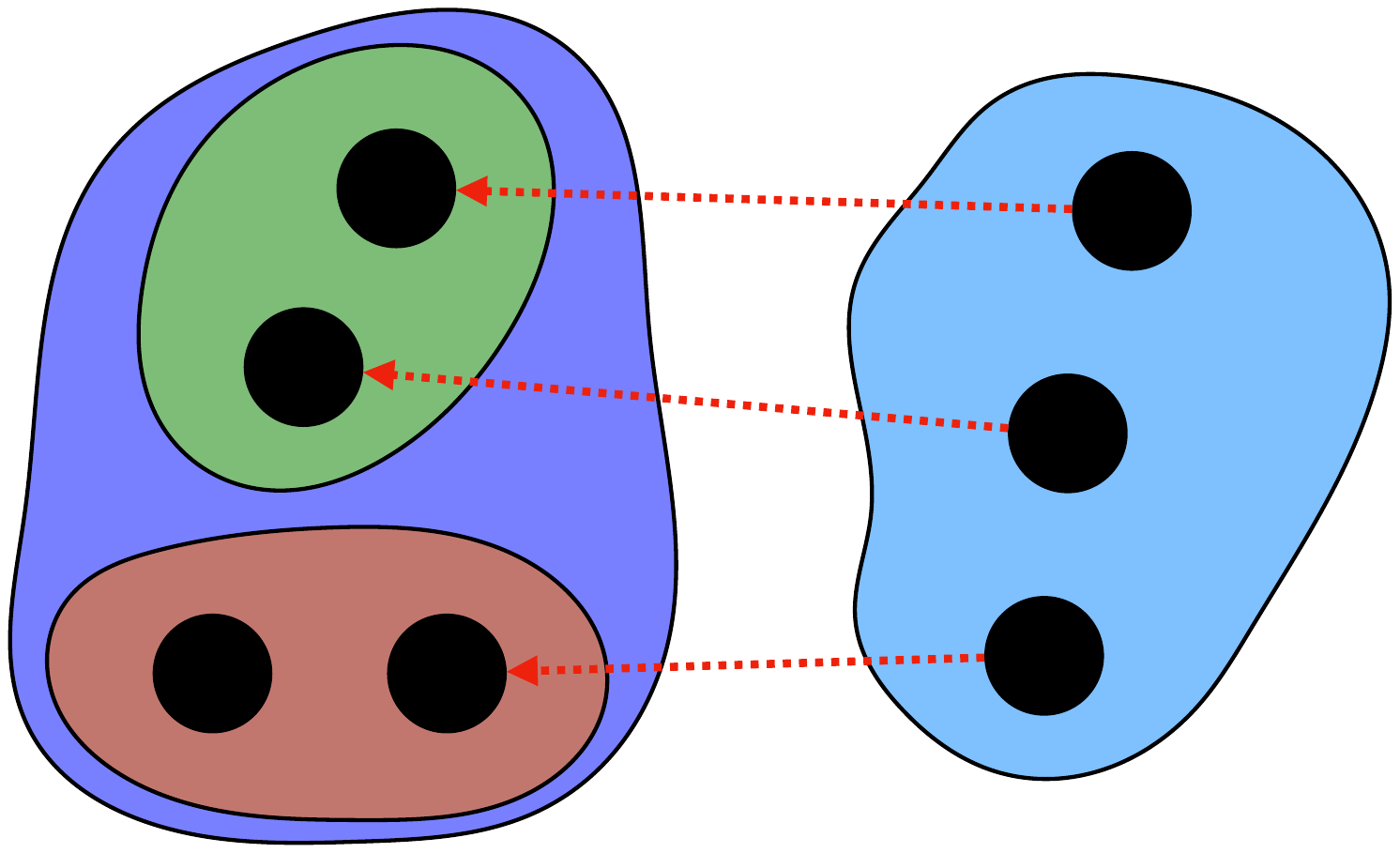}
        \caption{After Merge.}\label{sfig:ch_non-monotone2}
    \end{subfigure}   
    \caption{Demonstration of Observation~\ref{obs:min-monotone}} \label{fig:Decreasing_Ch}
\end{figure}

This observation implies that the Chamfer distance from a cluster $C$ to a newly formed cluster $A\cup B$ never increases. In particular, the nearest neighbor of $C$ either remains the same or becomes $A\cup B$. Furthermore, if either $A$ or $B$ was the nearest neighbor of $C$ before the merge, then $A\cup B$ is guaranteed to become its nearest neighbor afterward. Therefore, we can maintain the nearest neighbor of every active cluster by simply computing its distance to the newly merged cluster after each iteration.

Given this useful property, we now focus on efficiently computing the distances to and from the newly formed cluster $A\cup B$ after each merge.

For each active cluster $C\in \cC$, we maintain its Chamfer distance to every other active cluster, i.e., $\Ch(C,\cdot)$. However, this alone is not sufficient. The main challenge with Chamfer-linkage is that distances involving the newly formed cluster requires per-point nearest neighbor computations, which can be expensive. However, we show that these distances can be computed in $O(n)$ total time per merge by maintaining additional information: for each input point $p\in P$, we store its distance to every active cluster $C\in \cC$, defined as
\begin{align*}
d(p,C):= \min_{c\in C}d(p,c). 
\end{align*}
These values are updated after each merge.

Suppose two clusters $A$ and $B$ are merged to form $A\cup B$. Then for every point $p\in P$, we can compute its distance to $A\cup B$ via the identity
\begin{align}
    d(p,A\cup B)= \min\{d(p,A),d(p,B)\}.\label{eq:point_to_cluster}
\end{align}
Thus, all values $d(p,A\cup B)$ can be computed in $O(n)$ time using a single linear scan. Given these values, the incoming Chamfer distance from any active cluster $C$ to $A\cup B$ is given by
\begin{align}
    \Ch(C,A\cup B)=\sum_{p\in C} d(p,A\cup B).\label{eq:cluster_to_new_cluster}
\end{align}
Since each point belongs to exactly one active cluster, we can compute all incoming distances $\Ch(C,A\cup B)$ for all $C\in \cC$ in a single linear pass by aggregating the values computed in~\eqref{eq:point_to_cluster}.

The outgoing Chamfer distances are even simpler to compute. Since $A$ and $B$ are disjoint,
\begin{align}
    \Ch(A\cup B,C) = \Ch(A,C) +\Ch(B,C),\label{eq:new_cluster_to_cluster}
\end{align}
and thus all outgoing distances from $A\cup B$ can be computed in $O(n)$ time using the stored $\Ch(A,\cdot)$ and $\Ch(B,\cdot)$ values. The full implementation of the \merge subroutine for Chamfer-linkage is given in Algorithm~\ref{alg:Chamfer_HAC}.

\IncMargin{1.5em}
\begin{algorithm}[t]
\caption{Chamfer-linkage HAC}\label{alg:Chamfer_HAC}
Maintain $d(p,C)=\min_{c\in C}d(p,c)$, $\forall C\in \cC$\\
Maintain $\Ch(A,B)$, $\forall A,B\in \cC$\\
\BlankLine
\function{$\merge(A,B,\NN)$}{
    \tcp{Update distances from $A\cup B$}
    \For{$C \in \cC\setminus \{A,B\}$}{
        $\Ch(A\cup B, C) \gets \Ch(A,C) + \Ch(B,C)$\\
    }
    $\NN(A\cup B)\gets \argmin_{C\in\cC\setminus\{A,B\}}\Ch(A\cup B,C)$\\
    \BlankLine
    \tcp{Update distances to $A\cup B$}
    \For{$p \in P\setminus (A\cup B)$}{
        $d(p,A\cup B) \gets \min\{d(p,A),d(p,B)\}$
    }
    \For{$C\in \cC\setminus\{A,B\}$}{
        $\Ch(C,A\cup B) \gets \sum_{p\in C}d(p,A\cup B)$\\
        \uIf{$\NN(C)=A$ or $\NN(C)=B$}{
            $\NN(C)=A\cup B$\\
        }\ElseIf{$\Ch(C,A\cup B)<\Ch(C,\NN(C))$}{
            $\NN(C)=A\cup B$\\
        }
    }
}
\end{algorithm}
\DecMargin{1.5em}

\begin{theorem}\label{thm:Chamfer_quadratic}
Algorithm~\ref{alg:HAC_NN}, when instantiated with the Chamfer-linkage-specific \merge subroutine in Algorithm~\ref{alg:Chamfer_HAC}, correctly computes the Chamfer-linkage dendrogram in $O(n^2)$ time and space.
\end{theorem}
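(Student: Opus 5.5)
We prove correctness and the resource bounds separately, with correctness resting on a loop invariant. The invariant, maintained after initialization and after every call to \merge, has three parts: (i) for every input point $p$ and every active cluster $C\in\cC$ with $p\notin C$, the stored value equals $d(p,C)=\min_{c\in C}d(p,c)$; (ii) for every ordered pair of distinct active clusters $(X,Y)$, the stored value equals $\Ch(X,Y)$; (iii) for every active $X$, $\NN(X)\in\argmin_{Y\in\cC\setminus\{X\}}\Ch(X,Y)$.

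At initialization all clusters are singletons, so $d(p,\{q\})=d(p,q)$ and $\Ch(\{p\},\{q\})=d(p,q)$. The invariant then holds after $O(n^2)$ distance evaluations and an $O(n)$ scan per cluster to set $\NN$. Given the invariant, the main loop selects the pair $(A,\NN(A))$ minimizing $\Ch(X,\NN(X))$ over all $X$, which is exactly the global minimum of $\Ch$ over ordered pairs. As noted after the definitions, this coincides with HAC under the symmetrized linkage $\min\{\Ch(A,B),\Ch(B,A)\}$. So each merge is the one prescribed by Chamfer-linkage HAC, up to consistent tie-breaking, and by induction the output dendrogram is correct.

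The inductive step checks each part of the invariant after merging $A,B$.
\begin{itemize}
\item For (i), identity~\eqref{eq:point_to_cluster} gives $d(p,A\cup B)$, and the values $d(p,C)$ for untouched clusters $C$ are unchanged.
\item For (ii), outgoing distances from $A\cup B$ follow from~\eqref{eq:new_cluster_to_cluster}, which holds because $A\cap B=\emptyset$ and so the defining sum splits. Incoming distances to $A\cup B$ follow from~\eqref{eq:cluster_to_new_cluster}, which uses (i). Distances between two untouched clusters do not change, since both clusters are unchanged.
\item For (iii), the new cluster's $\NN(A\cup B)$ is computed by an explicit argmin. For any other cluster $C$, the candidate set loses $A$ and $B$ and gains $A\cup B$, and all other values are unchanged.
\end{itemize}

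Part (iii) is the delicate step, because $\Ch$ is not reducible. If $\NN(C)\notin\{A,B\}$, the old minimum is still available and the only new candidate is $A\cup B$, so a single comparison suffices. If $\NN(C)\in\{A,B\}$, we invoke Observation~\ref{obs:min-monotone}:
\[
\Ch(C,A\cup B)\le\min\{\Ch(C,A),\Ch(C,B)\}=\min_{Y}\Ch(C,Y)\le \Ch(C,Y')
\]
for every surviving $Y'$. Hence $A\cup B$ is a valid nearest neighbor, and no rescan is needed. Note that this step uses the monotonicity of Observation~\ref{obs:min-monotone} in place of reducibility.

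For the running time, there are $n-1$ merges, and each merge costs $O(n)$:
\begin{itemize}
\item the outgoing loop and the argmin take $O(|\cC|)$;
\item the scan over points takes $O(n)$;
\item the incoming sums take $O(n)$ in total, because each point lies in exactly one active cluster, so we aggregate by cluster id in one pass;
\item the nearest-neighbor updates take $O(|\cC|)$.
\end{itemize}
Selecting $A$ in Algorithm~\ref{alg:HAC_NN} is an $O(|\cC|)$ scan. The total time is therefore $O(n^2)$ including initialization.

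For space, the tables $d(p,C)$ ($n\times|\cC|$) and $\Ch(\cdot,\cdot)$ ($|\cC|^2$) take $O(n^2)$ if indexed by cluster slots. The merged cluster can reuse the slot of $A$, so the index set never grows. The only subtlety in this bookkeeping is to avoid a heap or augmented map, which would add a $\log n$ factor; plain arrays with linear scans suffice.
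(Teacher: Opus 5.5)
Your proof is correct and follows the same route as the paper: it maintains the point-to-cluster distances and all ordered Chamfer distances via \eqref{eq:point_to_cluster}--\eqref{eq:new_cluster_to_cluster}, spending $O(n)$ time per merge and using $O(n^2)$ space. Your invariant (iii), which uses Observation~\ref{obs:min-monotone} to justify the constant-time nearest-neighbor update, makes explicit a step that the paper's proof leaves to the discussion preceding the theorem.
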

\begin{proof}
In each iteration, Algorithm~\ref{alg:HAC_NN} considers the nearest neighbor of every active cluster and merges the pair with minimum distance. Thus, correctness reduces to maintaining the correct Chamfer distances between cluster pairs. These distances are updated using Equations~\eqref{eq:cluster_to_new_cluster} and~\eqref{eq:new_cluster_to_cluster}, which correctly compute the necessary values after each merge (by definition of Chamfer distance). It follows that the algorithm produces the correct Chamfer-linkage dendrogram.

For the running time, each merge involves computing all outgoing distances from the new cluster $A\cup B$ and computing its nearest neighbor which takes $O(n)$ time in total. To compute all incoming distances to $A\cup B$, we first update the point-to-cluster distances in $O(n)$ time, then aggregate by cluster in another $O(n)$ time, and finally update the nearest neighbors of all affected clusters, which also takes $O(n)$ time. Thus, each iteration takes $O(n)$ time, resulting in a total of $O(n^2)$ time over all $n-1$ merges.

For space, we store Chamfer distances between all $O(n^2)$ (ordered) pairs of active clusters, and in addition maintain the distance from each input point to every active cluster. Both components require $O(n^2)$ space in total.
\end{proof}

\myparagraph{Variants of Chamfer-linkage}
The algorithm described above can be extended to handle the normalized ($\Ch_N$), symmetric ($\Ch_S$), and normalized symmetric ($\Ch_{NS}$) variants of Chamfer-linkage with minor modifications. 

We first consider the normalized variant $\Ch_N$. Since distances from a cluster $A$ to other clusters are all divided by the same normalizing term $|A|$, the cluster that minimizes the unnormalized Chamfer distance $\Ch(A, \cdot)$ also minimizes the normalized distance $\Ch_N(A, \cdot)$. Therefore, the nearest neighbor computed using $\Ch$ remains correct for $\Ch_N$, and the same $O(n^2)$-time algorithm applies. The final linkage values can be computed on the fly from the stored distances and cluster sizes.

In contrast, the symmetric variants $\Ch_S$ and $\Ch_{NS}$ do not satisfy Observation~\ref{obs:min-monotone}. In particular, the nearest neighbor of a cluster $C$ may change to another cluster after a merge if $A$ or $B$ were previously its nearest neighbor. Therefore, we can no longer maintain the nearest neighbor of each cluster using only the updates to $A\cup B$.

To address this, we explicitly maintain the final linkage values (e.g., $\Ch_S(A,B) = \Ch(A,B) + \Ch(B,A)$) in augmented maps, as described in Section~\ref{sec:prelims} for other linkage functions. In each merge step, the raw Chamfer distances $\Ch(\cdot, \cdot)$ are updated as before, and we use them to compute and update the final linkage values in the maps. Since these maps take $O(\log n)$ time per update, and we perform $O(n)$ updates per merge, the total time per iteration is $O(n \log n)$, leading to an overall time bound of $O(n^2 \log n)$ for the symmetric variants.

\begin{theorem}\label{thm:Chamfer_variants}
    There is an $O(n^2)$-time and $O(n^2)$-space algorithm for Chamfer-linkage HAC using $\Ch$ or $\Ch_N$, and an $O(n^2 \log n)$-time and $O(n^2)$-space algorithm for the symmetric variants $\Ch_S$ and $\Ch_{NS}$.
\end{theorem}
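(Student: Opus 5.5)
The plan is to prove the four claims separately, reusing Theorem~\ref{thm:Chamfer_quadratic} for the two asymmetric linkages and the augmented-map version of \hacnn for the two symmetric ones.

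\textbf{The case $\Ch$.} This is exactly Theorem~\ref{thm:Chamfer_quadratic}, so nothing further is needed.

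\textbf{The case $\Ch_N$.} I run the same algorithm and keep the raw values $\Ch(\cdot,\cdot)$ and $d(p,\cdot)$. I also record cluster sizes, which costs $O(1)$ per merge via $|A\cup B|=|A|+|B|$.

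For each fixed $X$, the map $Y\mapsto \Ch_N(X,Y)=\Ch(X,Y)/|X|$ is a positive rescaling of $Y\mapsto\Ch(X,Y)$. Hence $\NN(X)$ is the same under both linkages. Observation~\ref{obs:min-monotone} also transfers, after dividing by $|C|$, so the nearest-neighbor updates in Algorithm~\ref{alg:Chamfer_HAC} remain valid.

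The only change is the global selection step $A\gets\argmin_X \Ch(X,\NN(X))/|X|$. This is still an $O(n)$ scan per iteration. Correctness and the $O(n^2)$ time and space bounds then follow verbatim.

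\textbf{The symmetric cases $\Ch_S$ and $\Ch_{NS}$.} Here nearest neighbors can move away from the merged cluster, so I instead maintain, for each active $C$, an augmented map keyed by the other active clusters. Each entry stores the value $\Ch_S(C,D)=\Ch(C,D)+\Ch(D,C)$, or for $\Ch_{NS}$ the value $\Ch(C,D)/|C|+\Ch(D,C)/|D|$, and the augmentation is the minimum.

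On a merge of $A$ and $B$, I first compute all $\Ch(A\cup B,\cdot)$ and $\Ch(\cdot,A\cup B)$ in $O(n)$ time via Equations~\eqref{eq:point_to_cluster}, \eqref{eq:cluster_to_new_cluster} and \eqref{eq:new_cluster_to_cluster}, exactly as before.

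Then, for each other active $C$, I perform three map operations: delete the entries for $A$ and $B$ from $C$'s map, and insert an entry for $A\cup B$. I also build the map for $A\cup B$ from its $O(n)$ values and discard the maps of $A$ and $B$. Each operation costs $O(\log n)$, giving $O(n\log n)$ per merge.

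Afterwards $\NN(C)$ is read off in $O(1)$ as the augmented minimum, and the global minimum is found by an $O(n)$ scan. Correctness holds because every stored linkage value is exact after each merge. Over $n-1$ merges the total time is $O(n^2\log n)$. Space is $O(n)$ entries per map over $O(n)$ maps, plus the $O(n^2)$ arrays from before, for $O(n^2)$ overall.

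\textbf{Main obstacle.} The step needing the most care is the $\Ch_N$ case. Normalization changes the values across different source clusters, but it must not break the monotone nearest-neighbor update. The key point is that normalization depends only on the source cluster $C$, which is unchanged when some other pair merges, so Observation~\ref{obs:min-monotone} still applies after scaling by $1/|C|$.
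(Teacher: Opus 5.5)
Your proposal is correct and follows the paper's own argument. The $\Ch$ case is Theorem~\ref{thm:Chamfer_quadratic}. The $\Ch_N$ case reuses the same algorithm, because dividing by the source-cluster size $|X|$ leaves $\NN(X)$ unchanged. The symmetric variants keep the final linkage values in augmented maps, with $O(n)$ updates of cost $O(\log n)$ each per merge. Your write-up is somewhat more explicit than the paper's, for example in the modified global selection step and the per-cluster delete/insert operations, but the route is the same.
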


\section{Space--Time Trade-off Algorithm}\label{sec:time-space-trade-off}

We now present a space--time trade-off algorithm for Chamfer-linkage HAC. As discussed in Section~\ref{sec:quadratic-time}, Chamfer-linkage HAC can be implemented in $O(n^2)$ time and space. Here, we show that for any parameter $t\in [1,n]$, we can reduce the space usage to $O(n^2/t)$ by increasing the running time to $O(n^2t)$.

At a high-level, we maintain the same set of distances as in the quadratic-time algorithm, but restricted to the subset of \emph{large} clusters---i.e., those of size at least $t$. The remaining distances are computed on the fly when needed. Specifically, we maintain:
\begin{itemize}
    \item $d(p,C)$ for all $p\in P$ and all active clusters $C\in \cC$ such that $|C| \ge t$, and
    \item $\Ch(C,\cdot)$ for all active clusters $C\in \cC$ such that $|C|\ge t$.
\end{itemize}
Since the number of large clusters is at most $n/t$ at any point, the overall space usage is $O(n^2/t)$.

To compute the outgoing Chamfer distances from $A\cup B$ to other clusters, we apply Equation~\eqref{eq:new_cluster_to_cluster} as in the quadratic-time algorithm. If both $A$ and $B$ are large, then the Chamfer distances $\Ch(A,\cdot)$ and $\Ch(B,\cdot)$ are already stored, and we can directly compute $\Ch(A\cup B,\cdot)$. Otherwise, suppose $A$ is a small cluster. In this case, we first compute the Chamfer distance from $A$ to every other active cluster. This involves computing, for each $a\in A$, the closest point to each cluster $C\in \cC$. Since $|A|<t$, this requires $O(nt)$ time. Once the point-to-cluster distances are computed, we aggregate them to obtain $\Ch(A,\cdot)$. We then apply Equation~\ref{eq:new_cluster_to_cluster} to compute the distances from $A\cup B$ to other clusters. After computing these values, we determine the nearest neighbor of $A\cup B$ and store it. If $A\cup B$ is large, we additionally store $\Ch(A\cup B,\cdot)$.

For incoming Chamfer distances to $A\cup B$, we must compute $d(p,A)$ and $d(p,B)$ for each point $p\in P\setminus(A\cup B)$. If either $A$ or $B$ are large, the corresponding distances are already stored; otherwise, we compute them from scratch. Since small clusters have at most $t$ points, this step requires $O(nt)$ time. We then apply Equation~\eqref{eq:point_to_cluster} to compute $d(p,A\cup B)$ from $d(p,A)$ and $d(p,B)$, and store the result if $A\cup B$ is large. Aggregating these point-level distances by cluster yields the incoming Chamfer distance $\Ch(C,A\cup B)$ for all $C\in \cC$. We then update the stored distances for large clusters and update the nearest neighbor for each cluster as needed.

Thus, the time spent in each iteration is $O(nt)$, resulting in the overall time bound of $O(n^2t)$. Note that this algorithm works only for the variants of Chamfer-linkage using $\Ch$ and $\Ch_N$; as discussed in Section~\ref{sec:quadratic-time}, the symmetric variants do not satisfy the key property in Observation~\ref{obs:min-monotone}, and may require recomputing the nearest neighbor of every other cluster from scratch, which is highly inefficient.

\begin{theorem}\label{thm:space_time_tradeoff}
    For any parameter $t \in [1, n]$, there is an algorithm for Chamfer-linkage HAC using $\Ch$ or $\Ch_N$ that runs in $O(n^2t)$ time and uses $O(n^2/t)$ space.
\end{theorem}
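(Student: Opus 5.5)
We prove Theorem~\ref{thm:space_time_tradeoff} by running the \hacnn framework of Algorithm~\ref{alg:HAC_NN} with a modified \merge that stores tables only for large clusters, those of size at least $t$. The argument has three parts: correctness, the space bound, and the time bound.

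\emph{Correctness.} The only information the framework needs from one iteration to the next is $\NN(C)$ and the value $\Ch(C,\NN(C))$ for every active $C$. We store this pair explicitly for every cluster, small or large, which takes $O(n)$ words. By Observation~\ref{obs:min-monotone}, after merging $A$ and $B$ the nearest neighbor of any other cluster $C$ either stays the same or becomes $A\cup B$. Moreover, if $\NN(C)\in\{A,B\}$, then it becomes $A\cup B$. So to maintain all nearest neighbors it suffices to compute:
\begin{itemize}
\item all outgoing values $\Ch(A\cup B,\cdot)$, to determine $\NN(A\cup B)$;
\item all incoming values $\Ch(\cdot,A\cup B)$, to update each $\NN(C)$ by one comparison.
\end{itemize}
For $\Ch_N$, the argmin over the second argument is unchanged by normalization, and the final values are obtained by dividing by $|C|$.

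\emph{Computing the values and the space bound.} We compute the outgoing values through Equation~\eqref{eq:new_cluster_to_cluster} and the incoming values through Equations~\eqref{eq:point_to_cluster} and~\eqref{eq:cluster_to_new_cluster}. Whenever a required table ($\Ch(A,\cdot)$, or $d(\cdot,A)$) is not stored because $A$ is small, we recompute it from scratch. For each $a\in A$ and each $p\in P$, we evaluate $d(a,p)$ and take minima grouped by the active cluster containing $p$. This uses a cluster-id array of size $n$, maintained in $O(n)$ time per merge by relabeling. Recomputing such a table costs $O(|A|\,n)=O(nt)$ time and only $O(n)$ temporary space. If $A\cup B$ is large, we store its new tables of size $O(n)$ and discard those of $A$ and $B$. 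We also update the entry for $A\cup B$ inside the stored $\Ch(C,\cdot)$ and $d(p,\cdot)$ tables of other large clusters. Clusters are disjoint, so at most $n/t$ large clusters are active at any time. Hence the stored tables occupy $O(n\cdot n/t)$ space, and with the $O(n)$ auxiliary arrays the total space is $O(n^2/t)$.

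\emph{Time bound.} Each merge performs $O(1)$ recomputations of cost $O(nt)$ and a constant number of linear passes. Selecting the global minimum pair in Algorithm~\ref{alg:HAC_NN} costs $O(n)$ per iteration. Over $n-1$ merges the total is $O(n^2t)$. The initial nearest-neighbor computation on singletons costs $O(n^2)$ time in a streaming fashion without storing the matrix, which is within the bound.

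\emph{Main obstacle.} The delicate step is confirming that nothing else implicitly requires small-cluster tables. Two points settle this:
\begin{itemize}
\item The value $\Ch(C,\NN(C))$ for a small $C$ is kept as a scalar, so comparing it with $\Ch(C,A\cup B)$ needs no table. This relies exactly on the monotonicity of Observation~\ref{obs:min-monotone}, which is why the argument fails for $\Ch_S$ and $\Ch_{NS}$.
\item $\Ch(C,A\cup B)$ for a small $C$ comes from aggregating the freshly computed $d(p,A\cup B)$ over $p\in C$, so it never requires $d(\cdot,C)$ itself.
\end{itemize}
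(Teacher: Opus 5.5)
Your proposal is correct and follows the paper's argument essentially step for step. Like the paper, you store $d(\cdot,C)$ and $\Ch(C,\cdot)$ only for the at most $n/t$ clusters of size at least $t$, recompute the tables of a small merging cluster from scratch in $O(nt)$ time, and use Observation~\ref{obs:min-monotone} together with Equations~\eqref{eq:point_to_cluster}--\eqref{eq:new_cluster_to_cluster} to bound each merge by $O(nt)$. You also make explicit two points the paper leaves implicit: keeping the scalar $\Ch(C,\NN(C))$ for small clusters, and performing the singleton initialization in a streaming pass that takes $O(n^2)$ time and $O(n)$ space.
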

\section{Empirical Evaluation}\label{sec:experiments}
We experimentally evaluate our implementation of Chamfer-linkage HAC against five classical linkage functions---average, centroid, complete, single, and Ward's---across $19$ benchmark clustering datasets, using four standard clustering metrics (ARI, NMI, AMI, and FMI), and obtain the following main results:
\begin{enumerate}[topsep=0pt,itemsep=0pt,parsep=0pt,leftmargin=15pt]
\item Chamfer-linkage with the asymmetric, non-normalized variant ($\Ch$) consistently outperforms all classical linkage functions in clustering quality---achieving up to \textbf{57\%} improvement in ARI over the best baseline, \textbf{6\%} improvement on average, and never more than \textbf{8\%} worse.
\item In contrast, even the strongest classical baseline, Ward's-linkage, is not consistently reliable: it can be up to \textbf{29\%} worse than the best-performing method in terms of ARI in the worst case, and is \textbf{4\%} worse on average.
\item Chamfer-linkage (\Ch) produces balanced dendrograms with low height, comparable to those produced by average- and Ward's-linkage.
\item We provide fast C++ implementations of Chamfer-linkage, and optimized implementations of the five classical linkages. In particular, Chamfer-linkage incurs no slowdown in running time compared to the classical baselines in our experiments, while our optimized implementations achieve up to \textbf{5.75--9.28\boldmath$\times$} speedup over highly optimized libraries such as \texttt{fastcluster} and {\tt scikit-learn}, respectively.
\end{enumerate}

\begin{table}[t]
\centering
\small
\begin{tabular}{lcccl}
\toprule
Dataset & n & d & k & Type \\
\midrule
{\tt iris}   & 150   & 4    & 3  & Tabular \\
{\tt wine}   & 178   & 13   & 3  & Tabular \\
{\tt cancer} & 569   & 30   & 2  & Tabular \\
{\tt digits} & 1797  & 64   & 10 & Raw Images \\
{\tt faces}  & 400   & 4096 & 40 & Raw Images \\
{\tt coil-20}  & 1440   & 16384 & 20 & Raw Images \\
{\tt coil-100}  & 7200   & 49152 & 100 & Raw Images \\
{\tt alloprof}  & 2556   & 4096 & 13 & Text Embeddings \\
{\tt usps}  & 9298   & 256 & 10 & Raw Images \\
{\tt hal}  & 26233   & 4096 & 10 & Text Embeddings \\
{\tt news}  & 59545 & 4096  & 20 & Text Embeddings \\
{\tt cifar-10}  & 60000 & 512  & 10 & Image Embeddings \\
{\tt cifar-100}  & 60000 & 512  & 100 & Image Embeddings \\
{\tt mnist}  & 70000 & 784  & 10 & Raw Images \\
{\tt fashion-mnist}  & 70000 & 512  & 10 & Image Embeddings \\
{\tt birds}  & 84635 & 1024 & 525 & Image Embeddings\\ 
{\tt food-101}  & 101000 & 512 & 101 & Image Embeddings\\ 
{\tt emnist}  & 131600 & 512 & 47 & Image Embeddings\\ 
{\tt reddit}  & 420464 & 1024	& 50 & Text Embeddings \\ 
{\tt covertype}  & 581012 & 54	& 7 & Tabular\\ 
\bottomrule
\end{tabular}
\caption{\small Datasets used in this paper.}
\label{tab:datasets}
\vspace{-0.5cm}
\end{table}


\begin{table*}[ht]
\centering
\vspace{-1em}
\caption{\small The Adjusted Rand Index (ARI) and Normalized Mutual Information (NMI) scores of the four Chamfer-linkage variants ($\Ch$, $\Ch_N$, $\Ch_S$ and $\Ch_{NS}$) versus baseline classical linkage functions. 
The best score for each dataset is shown in \textcolor{Green}{\underline{\textbf{green bold}}}. Additionally, whenever a Chamfer variant outperforms all baselines on a dataset, it is shown in \textcolor{Blue}{\textbf{blue bold}}.
}
\label{tab:quality-evals}
\small
\begin{tabular}{cc|ccccc|cccc}
\toprule
\multicolumn{1}{l}{} &
  Dataset &
  Average &
  Centroid &
  Complete &
  Single &
  Ward's &
  $\Ch$ &
  $\Ch_N$ &
  $\Ch_S$ &
  $\Ch_{NS}$\\
\midrule
\multirow{18}{*}{\begin{sideways}ARI\end{sideways}} 
     & {\tt iris}       & 0.611 & 0.759 & 0.642 & 0.738 & 0.731 & \better{0.759} & \best{0.775} & 0.718 & 0.749\\
     & {\tt wine}       & 0.352 & 0.352 & 0.371 & 0.3 & 0.368 & 0.359 & 0.359 & \better{0.391} & \best{0.402}\\
     & {\tt cancer}     & 0.537 & 0.509 & 0.465 & \best{0.574} & 0.406 & 0.541 & 0.438 & 0.421 & 0.476\\
     & {\tt digits}     & 0.799 & 0.718 & 0.519 & 0.784 & 0.845 & \best{0.875} & \better{0.874} & 0.762 & 0.792 \\
     & {\tt faces}      & 0.6 & 0.539 & 0.496 & 0.626 & 0.613 & \better{0.647} & 0.624 & 0.625 & \best{0.652} \\
     & {\tt coil-20}    & 0.59 & 0.632 & 0.555 & \best{0.849} & 0.701 & 0.777 & 0.67 & 0.762 & 0.651 \\
     & {\tt coil-100}   & 0.637 & 0.647 & 0.589 & \best{0.798} & 0.673 & 0.742 & 0.711 & 0.754 & 0.685 \\
     & {\tt alloprof}   & \best{0.801} & 0.777 & 0.734 & 0.726 & 0.783 & 0.735 & 0.785 & 0.785 & 0.739 \\
     & {\tt usps}       & 0.573 & 0.499 & 0.377 & 0.373 & 0.676 & \best{0.798} & \better{0.720} & \better{0.693} & 0.576 \\
     & {\tt hal}        & 0.293 & 0.304 & 0.248 & 0.044 & 0.352 & \best{0.408} & 0.211 & 0.311 & 0.317 \\
     & {\tt news}       & 0.467 & 0.398 & 0.414 & 0.284 & 0.464 & \best{0.470} & 0.459 & 0.456 & 0.449 \\
     & {\tt cifar-10}   & 0.7 & 0.339 & 0.379 & 0.124 & 0.69 & 0.689 & \best{0.71} & 0.534 & 0.580 \\
     & {\tt cifar-100}  & 0.308 & 0.155 & 0.213 & 0.095 & 0.303 & \best{0.318} & \better{0.31} & 0.262 & 0.275 \\
     & {\tt mnist}      & 0.489 & 0.328 & 0.251 & 0.22 & 0.595 & \best{0.816} & \better{0.782} & \better{0.611} & 0.53 \\
     & {\tt fashion-mnist} & 0.526 & 0.450 & 0.302 & 0.248 & 0.557 & \best{0.610} & \better{0.574} & 0.475 & \better{0.563} \\
     & {\tt birds}      & 0.649 & 0.587 & 0.587 & 0.491 & \best{0.752} & 0.746 & 0.633 & 0.691 & 0.612 \\
     & {\tt food-101}   & 0.467 & 0.166 & 0.260 & 0.127 & 0.449 & \best{0.478} & 0.455 & 0.325 & 0.376 \\
     & {\tt emnist}     & 0.258 & 0.201 & 0.121 & 0.072 & 0.263 & \best{0.412} & \better{0.392} & \better{0.278} & \better{0.319} \\
\cmidrule{2-11}
     & {\bf Geomean}    & 0.511 & 0.418 & 0.382 & 0.301 & 0.538 & \best{0.595} & \better{0.547}  & 0.502 & 0.530\\
\midrule 
\multirow{18}{*}{\begin{sideways}NMI\end{sideways}} 
     & {\tt iris}       & 0.734 & \best{0.806} & 0.722 & 0.734 & 0.77 & \best{0.806} & \best{0.806} & 0.763 & 0.784 \\
     & {\tt wine}       & 0.428 & 0.428 & \best{0.442} & 0.417 & 0.428 & 0.428 & 0.428 & 0.431 & 0.417 \\
     & {\tt cancer}     & 0.457 & 0.428 & 0.43 & 0.371 & 0.423 & \best{0.465} & 0.394 & 0.425 & 0.439 \\
     & {\tt digits}     & 0.852 & 0.805 & 0.72 & 0.817 & 0.881 & \better{0.904} & \best{0.908} & 0.828 & 0.849 \\
     & {\tt faces}      & 0.866 & 0.862 & 0.851 & \best{0.884} & 0.869 & 0.882 & 0.88  & 0.869 & 0.883 \\
     & {\tt coil-20}    & 0.806 & 0.821 & 0.786 & \best{0.928} & 0.84 & 0.898 & 0.863 & 0.875 & 0.84 \\
     & {\tt coil-100}   & 0.879 & 0.875 & 0.85 & \best{0.938} & 0.881 & 0.917 & 0.913 & 0.918 & 0.903 \\
     & {\tt alloprof}   & \best{0.803} & 0.781 & 0.788 & 0.737 & 0.779 & 0.778 & 0.799 & 0.797 & 0.743 \\
     & {\tt usps}       & 0.684 & 0.612 & 0.55 & 0.512 & 0.76 & \best{0.823} & \better{0.761} & 0.743 & 0.685 \\
     & {\tt hal}        & 0.394 & 0.391 & 0.378 & 0.336 & 0.399 & \best{0.415} & 0.387 & 0.374 & 0.380 \\
     & {\tt news}       & 0.623 & 0.618 & 0.592 & 0.596 & 0.625 & 0.617 & \best{0.641} & 0.618 & 0.597 \\
     & {\tt cifar-10}   & 0.712 & 0.499 & 0.572 & 0.387 & 0.746 & \best{0.767} & 0.718 & 0.588 & 0.691 \\
     & {\tt cifar-100}  & 0.602 & 0.603 & 0.596 & 0.615 & 0.595 & 0.597 & \best{0.649} & 0.610 & 0.593 \\
     & {\tt mnist}      & 0.63 & 0.522 & 0.484 & 0.406 & 0.714 & \best{0.853} & \better{0.796} & 0.71 & 0.645 \\
     & {\tt fashion-mnist} & 0.635 & 0.566 & 0.503 & 0.414 & 0.689 & \best{0.716} & 0.651 & 0.578 & 0.683 \\
     & {\tt birds}      & 0.905 & 0.872 & 0.882 & 0.833 & 0.929 & \best{0.935} & 0.9 & 0.909 & 0.89 \\
     & {\tt food-101}   & 0.671 & 0.613 & 0.605 & 0.602 & 0.659 & \better{0.687} & \best{0.693} & 0.635 & 0.607 \\
     & {\tt emnist}     & 0.545 & 0.509 & 0.503 & 0.516 & 0.543 & \best{0.644} & \better{0.620} & \better{0.552} & \better{0.552} \\
\cmidrule{2-11}
     & {\bf Geomean}        & 0.661 & 0.624 & 0.606 & 0.580 & 0.675 & \best{0.708} & \better{0.688} & 0.650 & 0.663\\
\bottomrule
\end{tabular}
\end{table*}
\myparagraph{Experimental Setup} We run our experiments on a 96-core Dell PowerEdge R940 machine (with two-way hyperthreading), equipped with 4$\times$2.4GHz Intel 24-core 8160 Xeon processors (with 33MB L3 cache) and 1.5TB of main memory. All implementations are written in C++ and compiled using \texttt{g++} (version 11.4) with the \texttt{-O3} optimization flag. We also provide Python bindings for our implementations.

\myparagraph{Datasets}
We use a variety of real-world clustering datasets, summarized in Table~\ref{tab:datasets} and described in detail in Appendix~\ref{app:datasets}.

\myparagraph{Algorithms Implemented}
We implement all four Chamfer-linkage variants---$\Ch$, $\Ch_N$, $\Ch_S$, and $\Ch_{NS}$---in the \hacnn framework described in Section~\ref{sec:quadratic-time}. We also implement five widely used classical linkage functions---average, centroid, complete, single, and Ward's---as baselines, all within the same \hacnn framework.

Our implementations use the ParlayLib library~\cite{blelloch2020parlaylib} to leverage shared-memory multicore parallelism. 
Compared to highly optimized implementations from existing libraries such as \texttt{fastcluster} and \texttt{scikit-learn}, our versions provide significant speedups: by up to $5.75$--$9.28\times$ respectively; see Appendix~\ref{app:running_times} for more details.

Finally, we implement the space--time trade-off algorithm for Chamfer-linkage from Section~\ref{sec:time-space-trade-off}, and evaluate it on the larger datasets where the other $O(n^2)$-space methods fail to run.

As a baseline for the large-scale setting, we also evaluate the recent $(1+\eps)$-approximate centroid-linkage algorithm from~\cite{Bateni2025EfficientCentroid}, which achieves subquadratic running time and near-linear space. We use the recommended setting $\eps = 0.1$, and refer to this algorithm as $\text{Centroid}_{0.1}$.

\begin{table*}[t]
\centering
\vspace{-1em}
\caption{\small Dendrogram heights for Chamfer-linkage and the various baselines across all datasets. For each dataset, the \emph{Geomean} column reports the geometric mean of the dendrogram heights across all methods. The final \emph{Score} row shows the average ratio of each method's dendrogram height to the geometric mean for that dataset (lower means more balanced).
}
\label{tab:dendrogram_height}
\small
\begin{tabular}{cc|cccccc|cccc|c}
\toprule
  Dataset &
  n &
  Average &
  Centroid &
  $\text{Centroid}_{0.1}$ &
  Complete &
  Single &
  Ward's &
  $\Ch$ &
  $\Ch_N$ &
  $\Ch_S$ &
  $\Ch_{NS}$ &
  Geomean\\
\midrule
     {\tt iris} & 150        & 13 & 15 & 14 & 11 & 31 & 11 & 12 & 26 & 11 & 16 & 14.96 \\
     {\tt wine} & 178      & 12 & 14 & 14 & 11 & 25 & 11 & 14 & 15 & 11 & 13 & 13.58 \\
     {\tt cancer} & 569    & 18 & 21 & 20 & 16 & 131 & 14 & 20 & 31 & 14 & 22 & 23.05\\
     {\tt digits} & 1797    & 24 & 102 & 114 & 18 & 257 & 16 & 44 & 65 & 13 & 61 & 46.46\\
     {\tt faces}  & 400    & 25 & 83 & 78 & 15 & 85 & 14 & 24 & 76 & 11 & 44 & 34.73 \\
     {\tt coil-20} & 1440   & 27 & 44 & 45 & 22 & 98 & 15 & 27 & 53 & 14 & 35 & 32.24\\
     {\tt coil-100} & 7200  & 38 & 112 & 121 & 25 & 263 & 20 & 37 & 175 & 16 & 78 & 59.4\\
     {\tt alloprof} & 2556 & 40  & 108 & 108 & 23 & 355 & 20 & 33 & 124 & 15 & 80 & 57.19\\
     {\tt usps} & 9298     & 75 & 704 & 754 & 36 & 4797 & 21 & 61 & 882 & 16 & 321 & 182.01 \\
     {\tt hal} & 26233     & 65 & 725 & 920 & 32 & 12945 & 24 & 68 & 1584 & 19 & 330 & 222.03 \\
     {\tt news} & 59545    & 271 & 1116 & 3171 & 263 & 2585 & 30 & 260 & 1018 & 255 & 453 & 476.07 \\
     {\tt cifar-10} & 60000   & 182 & 7189 & 8932 & 33 & 37831 & 25 & 497 & 8368 & 18 & 1402 & 721.60 \\
     {\tt cifar-100} & 60000  & 157 & 9990 & 11590 & 35 & 38960 & 25 & 644 & 14766 & 19 & 1665 & 845.37 \\
     {\tt mnist} & 70000   & 95 & 3320 & 4157 & 35 & 40931 & 25 & 113 & 4473 & 21 & 856 & 460.26 \\
     {\tt fashion-mnist} & 70000   & 101 & 1916 & 3064 & 33 & 32681 & 25 & 227 & 12032 & 19 & 677 & 473.09 \\
     {\tt birds} & 84635   & 126 & 2301 & 3088 & 40 & 16757 & 36 & 105 & 1575 & 21 & 739 & 375.31 \\
     {\tt food-101} & 101000   & 189 & 19120 & 20818 & 37 & 68777 & 27 & 292 & 8947 & 19 & 2560 & 958.84 \\
     {\tt emnist} & 131600   & 159 & 3630 & 5818 & 40 & 91349 & 30 & 286 & 20385 & 20 & 1432 & 756.59 \\
\midrule
     \multicolumn{2}{c|}{\bf Score}  & 0.490 & 4.781 & 5.946 & 0.327 & 34.165 & 0.260 & 0.595 & 7.408 & 0.259 & 1.516 & 1 \\
\bottomrule
\end{tabular}
\end{table*}

\begin{table}[t]
\centering
\vspace{-1em}
\caption{\small The Adjusted Rand Index (ARI) and Normalized Mutual Information (NMI) scores of Chamfer- and approximate centroid-linkage on {\tt reddit} and {\tt covertype}. The best score for each dataset is shown in \textcolor{Green}{\underline{\textbf{green bold}}}.
}
\label{tab:large-evals}
\small
\begin{tabular}{cc|c|c}
\toprule
\multicolumn{1}{l}{} &
  Dataset &
  $\text{Centroid}_{0.1}$ &
  $\Ch$ \\
\midrule
\multirow{2}{*}{\begin{sideways}ARI\end{sideways}} 
     & {\tt reddit}       & 0.064 & \best{0.38}\\
     & {\tt covertype}       & \best{0.547} & \best{0.547}\\
\midrule 
\multirow{2}{*}{\begin{sideways}NMI\end{sideways}} 
     & {\tt reddit}       & \best{0.538} & 0.525 \\
     & {\tt covertype}       & 0.181 & \best{0.188}\\
\bottomrule
\end{tabular}
\vspace{-1.5em}
\end{table}
\subsection{Quality Evaluation}\label{sec:qual_evals}
We evaluate clustering quality of our algorithms against ground truth labelings using standard metrics such as \emph{Adjusted Rand Index (ARI)}, \emph{Normalized Mutual Information (NMI)}, \emph{Adjusted Mutual Information (AMI)} and \emph{Fowlkes-Mallows Index (FMI)}. 
Given a dendrogram, these scores are computed by considering the best score obtained over the clusterings formed by all possible \emph{thresholded-cuts} to the dendrograms; see Appendix~\ref{app:quality_evals} for a more detailed description.

\myparagraph{Results}
Table~\ref{tab:quality-evals} contains the results of our quality evaluations with respect to ARI and NMI scores; AMI and FMI scores are presented in Table~\ref{tab:quality-evals_other} in the Appendix.

We observe that Chamfer-linkage using the asymmetric, non-normalized variant ($\Ch$), consistently performs well across all datasets. It is always within \textbf{8\%} of the best-performing baseline in terms of ARI (worst case), and outperforms the best baseline by up to \textbf{57\%} in the best case (on \texttt{emnist}), with an average improvement of \textbf{6\%}. For NMI, $\Ch$ is always within \textbf{3\%} of the best baseline in the worst case, and improves upon it by up to \textbf{20\%} in the best case (on \texttt{mnist}), with an average gain of \textbf{2\%}. Notably, the best classical baseline---average and Ward's---can be up to \textbf{29\%} and \textbf{9\%} worse than the best-performing baseline in terms of ARI and NMI, respectively, underscoring the consistency and robustness of Chamfer-linkage. We see similar performance gains in AMI and FMI as well (see Appendix~\ref{app:quality_evals}).

Among the other Chamfer variants, $\Ch_N$ can achieve sizable best-case gains over the best baseline---up to \textbf{49\%} in ARI and \textbf{14\%} in NMI (both on \texttt{emnist}). On average, however, none of the variants improves over the best baseline, and they can be substantially worse on some datasets---up to \textbf{40\%} (ARI) and \textbf{21\%} (NMI) worse. 
Nonetheless, all three variants---$\Ch_N$, $\Ch_S$ and $\Ch_{NS}$---remain competitive: their average gaps to the best method are \textbf{1\%}, \textbf{6\%}, and \textbf{10\%} in ARI, and \textbf{2\%}, \textbf{4\%}, and \textbf{5\%} in NMI; by comparison, average- and Ward's-linkage have average gaps of \textbf{9\%} and \textbf{4\%} in ARI, and \textbf{6\%} and \textbf{3\%} in NMI.

Single-, complete- and centroid-linkage demonstrate substantial variability in performance. For instance, single-linkage is the best-performing method (including Chamfer variants) on certain datasets (like {\tt cancer}, {\tt coil-20}, etc.), but can perform up to \textbf{88\%} worse than the best-performing baseline (in \texttt{hal}), and is \textbf{37\%} worse on average in terms of ARI. 
Similarly, complete- and centroid-linkage can be up to \textbf{58\%} and \textbf{65\%} worse than the best-performing baseline, and \textbf{30\%} and \textbf{23\%} worse on average, respectively, in terms of ARI. We observe similar trends for NMI, AMI and FMI as well.

\myparagraph{Discussion}
We briefly discuss why certain Chamfer variants perform better than others in our experiments.
We observed that asymmetric variants generally seem to outperform the symmetric ones; indeed they also outperform all baselines on average. 
This is possibly because $\Ch$ and $\Ch_N$ prioritize directional proximity, allowing the algorithm to merge a small, coherent cluster into a larger one without requiring mutual closeness. 
In contrast, symmetric variants like $\Ch_S$ tend to favor merges between clusters that are mutually close in both directions. 
This can encourage merges between clusters containing shared concepts, which may be beneficial on some datasets (e.g., overlapping/correlated classes), but overly conservative on others.

Comparing the unnormalized and normalized variants, we observe that $\Ch$ often performs better than $\Ch_N$. 
A possible reason is that $\Ch$ gives precedence to merging small clusters early. 
This property helps form tight subclusters that then later integrate well. 
In contrast, $\Ch_N$ merges are driven by the average distance, which may cause large clusters to merge earlier than desirable in some cases, leading to overly skewed clusterings. This can be observed in Section~\ref{sec:balanced_hierarchies}, where we observe that $\Ch_N$ tends to produce relatively taller hierarchies.

Overall, Chamfer-linkage with $\Ch$ strikes a good balance across the various design criteria and desiderata outlined in Section~\ref{sec:intro}, and its strong empirical performance across the diverse set of datasets we consider further supports it as a robust practical choice.

\subsection{Scaling to large datasets}
None of the exact classical HAC baselines or Chamfer variants were able to scale to the largest datasets in our benchmark ({\tt reddit} and {\tt covertype}) due to their $O(n^2)$ space usage. All of them ran out of memory on our machine with 1.5TB of RAM. The only baseline that completed on these datasets was the approximate centroid-linkage algorithm ($\text{Centroid}_{0.1}$). 

To evaluate on this scale, we run our space--time trade-off algorithm for Chamfer-linkage with the setting $t=10$. As shown in Table~\ref{tab:large-evals}, it significantly outperforms $\text{Centroid}_{0.1}$ in ARI on reddit, and matches it in every other aspect. We note that the running times for Chamfer-linkage are significantly higher due to the superquadratic cost of the algorithm (see Appendix~\ref{app:running_times}), but the ability to complete and consistently perform well, even at this scale, highlights the potential of Chamfer-linkage and motivates the development of more scalable and parallel implementations.

\subsection{Dendrogram Height} \label{sec:balanced_hierarchies}
We evaluate the height of the dendrograms produced by Chamfer-linkage and the baseline linkage functions as a proxy for their structural balance. To compare across datasets of different scales, we normalize by the geometric mean of the dendrogram heights across all methods on each dataset. We define a \emph{score} for each method as the average ratio between its dendrogram height and the corresponding geometric mean height---lower values indicate more balanced dendrograms, with values below~$1$ representing better-than-average performance. See Table~\ref{tab:dendrogram_height} for the raw heights and these scores.

\myparagraph{Discussion} We observe that classical linkage methods such as complete, average and Ward's consistently produce low-height dendrograms, while single-linkage yields the tallest hierarchies. 

Among the Chamfer variants, both $\Ch$ and $\Ch_S$ yield relatively low-height dendrograms. In particular, $\Ch_S$ consistently produces the most balanced hierarchies across datasets. This may be attributed to the symmetry in $\Ch_S$, which encourages merges between similarly-sized clusters that are mutually close or equally relevant. Such a property naturally leads to more balanced trees. In contrast, $\Ch_N$ often produces significantly taller hierarchies, supporting our earlier hypothesis that normalization may cause large clusters to merge prematurely, resulting in chaining and skewed hierarchies.
\section{Related Work on HAC}
Hierarchical Agglomerative Clustering (HAC) has a long history of study going back over a century.
Many of the widely used linkage functions today (e.g., single-, complete-, average-, centroid-, Ward's, etc.) are captured by the Lance--Williams update framework~\cite{lance1967general}. 
A seminal and early line of work starting in the 1970s studied how to compute these classical linkages efficiently and exactly, including $O(n^2)$-time algorithms in settings where an explicit distance matrix is available (e.g., via nearest-neighbor chain methods and related techniques)~\cite{benzecri,irbook}. 
Moreover, for exact HAC, truly subquadratic running times seem unlikely in general, since even identifying the first merge requires solving a closest-pair-type problem: a canonical problem in fine-grained complexity theory.
On the systems side, modern libraries such as {\tt fastcluster} implement highly optimized sequential routines for these classical linkages that run in $O(n^2)$ time and are widely used in practice~\cite{mullner2013fastcluster}.

More recently, a line of work has studied how to scale hierarchical clustering to large datasets, often by relaxing or modifying parts of the standard HAC pipeline: e.g., graph- and MST-based formulations~\cite{affinity,monath2021scalable,dhulipala2021hierarchical}, parallel and distributed computation~\cite{Dhulipala2024OptimalParallel,yu2021parchain, monath2021scalable}, or approximate merge rules~\cite{dhulipala2022hierarchical,dhulipala2023terahac, cochez2015twister,Bateni2025EfficientCentroid,abboudhac}. 
These methods primarily focus on scalability (e.g., parallel or distributed settings, and studying billion-scale or larger clustering instances) while typically benchmarking clustering quality against classical HAC linkages, which remain the standard reference points in both software toolkits and empirical evaluations.

In this work, we focus on a more basic and arguably fundamental question: \emph{what is the right linkage for HAC?} 
Improved results for this question influence both lines of work mentioned above: improved linkage functions should make their way into popular libraries such as {\tt SciPy} and {\tt fastcluster}, as well as expand the set of linkage functions studied by the scalability-oriented literature.
An immediate next step building on this work will be to study scalable and approximate variants of Chamfer-linkage.

\section{Conclusion}
In this work, we introduced Chamfer-linkage, a novel linkage function for Hierarchical Agglomerative Clustering that cleanly captures concepts in one cluster being well represented in another cluster. We showed that despite its complexity, Chamfer-linkage HAC can be computed in $O(n^2)$ time, matching the efficiency of classical linkage methods. Furthermore, we provided a practical space--time trade-off algorithm which extends its applicability to larger clustering inputs. Our empirical evaluation shows that Chamfer-linkage consistently outperforms widely used classical linkage functions across a diverse collection of real-world datasets, achieving significant gains in clustering quality. 
For future work, we are interested in understanding how to parallelize Chamfer-linkage HAC, and to develop rigorous parallel approximation algorithms that can scale to very large datasets in shared-memory and distributed environments without sacrificing quality.

\begin{acks}
This work is supported by NSF grants CCF-2403235, CNS-2317194 and CCF-2403236. 
\end{acks}

\bibliographystyle{ACM-Reference-Format}
\bibliography{references}

\appendix

\section{Empirical Evaluation continued}\label{app:empirical_evals}
\subsection{Datasets}\label{app:datasets}
\myparagraph{Datasets}
We use a variety of real-world clustering datasets, summarized in Table~\ref{tab:datasets} and described below:
\begin{itemize}
\item {\tt iris, wine, cancer,  digits, faces} and {\tt covertype} are standard clustering datasets from the UCI repository~\cite{UCI} (CC BY 4.0).
\item {\tt mnist}\cite{mnist} (CC BY-SA 3.0 DEED) is a standard machine learning dataset that consists of $28\times 28$ dimensional grayscale images of handwritten digits ($0$--$9$). Each digit represents a distinct cluster.
\item {\tt usps}~\cite{hull2002database} (CC BY 4.0) is another classic dataset consisting of grayscale images of handwritten digits, but at a lower resolution ($16\times 16$).
\item {\tt coil-20}~\cite{datasetcoil20} and {\tt coil-100}~\cite{datasetcoil100} (Non-Commercial Research Purposes Only) are object image datasets with grayscale images taken from varying angles. Each object represents a separate cluster.

\item {\tt birds}~\cite{klu2022birds} (CC0: Public Domain) contains $224\times 224\times 3$ color images of 525 species of birds. Following prior work~\cite{yu2023pecann,Bateni2025EfficientCentroid}, we pass each image through ConvNeXt~\cite{Liu_2022_CVPR} to obtain an embedding. Each species corresponds to a ground-truth cluster.

\item {\tt alloprof} (CC BY-NC-SA 4.0) is a French educational text dataset curated by AlloProf~\cite{alloprofpaper}. Following the Massive Text Embedding Benchmark (MTEB)~\cite{MTEB}, we cluster documents into $13$ topical classes, using $4096$-dimensional embeddings generated by Qwen3-Embedding-8B~\cite{qwen3embedding}, which ranks at the top of the MTEB multilingual leaderboard among open-weight models (as of February 2026).

\item {\tt hal} (Apache-2.0) is a text dataset built from the HAL open-access repository of scientific papers~\cite{halclustering}. Following MTEB~\cite{MTEB}, we use the clustering split (with $10$ clusters in our setup) and generate $4096$-dimensional embeddings using Qwen3-Embedding-8B~\cite{qwen3embedding}.

\item {\tt news} (CC BY 4.0) is the Twenty Newsgroups dataset by UCI~\cite{UCI}, which consists of posts from $20$ newsgroups. Each newsgroup represents a distinct cluster. We generate $4096$-dimensional embeddings using Qwen3-Embedding-8B~\cite{qwen3embedding}.

\item {\tt cifar-10} and {\tt cifar-100} are standard vision datasets consisting of $32\times 32\times 3$ color images with $10$ and $100$ classes, respectively~\cite{krizhevsky2009learning}. Each class represents a distinct cluster. We generate embeddings by passing each image through CLIP ViT-B/32~\cite{clip}.

\item {\tt fashion-mnist} (MIT License)~\cite{fashion-mnist} consists of $28\times 28$ grayscale images of fashion items from $10$ categories. Each category represents a distinct cluster. We generate embeddings using CLIP ViT-B/32~\cite{clip}.

\item {\tt food-101} (CC BY-SA 4.0)~\cite{food101} contains $101{,}000$ color images from $101$ food categories. Each category represents a distinct cluster. We generate embeddings using CLIP ViT-B/32~\cite{clip}.

\item {\tt emnist} (CC0: Public Domain)~\cite{emnist} is an extension of MNIST provided by NIST. We use the Balanced split, which contains $131{,}600$ character images across $47$ classes. Each class represents a distinct cluster. We generate embeddings using CLIP ViT-B/32~\cite{clip}.

\item {\tt reddit} (CC BY 4.0) is a text dataset from MTEB~\cite{MTEB}, where the goal is to cluster Reddit post titles into subreddits. We use GTE-large embeddings~\cite{li2023generaltextembeddingsmultistage}, as done in prior work~\cite{yu2023pecann}.
\end{itemize}

\subsection{Running Times}\label{app:running_times}
\begin{table}[t]
\centering
\caption{\small Running times (seconds) on \texttt{mnist} and \texttt{birds}. We report results for \texttt{fastcluster}, \texttt{scikit-learn} (which does not include centroid), our optimized C++ implementations (five classical linkages and four chamfer variants), and the $(1+\varepsilon)$-approximate centroid baseline.}
\label{tab:runtime}
\footnotesize
\begin{tabular}{cc|rr}
\toprule
 & Method & \texttt{mnist} & \texttt{birds} \\
\midrule
\multirow{4}{*}{\tt scikit-learn}
  & Average   & 1504.08 & 2688.29 \\
  & Complete  & 1489.73 & 2693.77 \\
  & Single    & 2555.94 & 4778.61 \\
  & Ward's    & 1506.62 & 2711.93 \\
\midrule
\multirow{5}{*}{\tt fastcluster}
  & Average   & 1299.97 & 2444.29 \\
  & Centroid  & 1265.57 & 2420.26 \\
  & Complete  & 1300.08 & 2453.85 \\
  & Single    & 1239.04 & 2386.04 \\
  & Ward's    & 1302.19 & 3465.15 \\
\midrule
\cite{Bateni2025EfficientCentroid} & $\text{Centroid}_{0.1}$ & 82.18   & 79.45 \\
\midrule
\multirow{9}{*}{This paper} 
  & Average         & 442.576 & 526.627 \\
  & Centroid        & 478.971 & 573.355 \\
  & Complete        & 467.352 & 527.011 \\
  & Single          & 420.275 & 514.670 \\
  & Ward's          & 492.320 & 603.124 \\
  & $\Ch$         & 404.205 & 486.380 \\
  & $\Ch_N$       & 400.839 & 492.087 \\
  & $\Ch_S$       & 414.059 & 513.012 \\
  & $\Ch_{NS}$    & 418.027 & 521.356 \\
\bottomrule
\end{tabular}
\end{table}

We report running times on the two of the large datasets where all exact methods complete—{\tt mnist} and {\tt birds}. Table~\ref{tab:runtime} lists results for the baseline classical linkages from {\tt scikit-learn} and {\tt fastcluster}, the approximate Centroid baseline $(\text{Centroid}_{0.1})$~\cite{Bateni2025EfficientCentroid}, and our implementations of the five classical linkages and the four Chamfer variants (note: {\tt scikit-learn} does not include Centroid linkage). Each number is the average over five runs.

Relative to {\tt fastcluster}, our implementations achieve up to $5.75\times$ speedup (geometric mean: $3.6\times$). Against {\tt scikit-learn}, we achieve up to $9.28\times$ speedup (geometric mean: $4.7\times$). The Chamfer variants ($\Ch$, $\Ch_{N}$, $\Ch_{S}$, $\Ch_{NS}$) run in essentially the same time as the classical linkages within the \hacnn framework; we observe no additional overheads from their merge updates.

On the largest datasets ({\tt reddit}, {\tt covertype}), only Chamfer (via our space–time trade-off implementation) and $\text{Centroid}{0.1}$ complete, as all other exact $O(n^2)$-space methods exhaust memory. The running times are 31{,}680 s and 15{,}300 s for Chamfer, versus 672.7 s and 139.5 s for $\text{Centroid}{0.1}$. These high times for Chamfer are expected given the super-quadratic complexity in this setting.

\begin{table*}[ht]
\centering
\vspace{-1em}
\caption{\small The Adjusted Mutual Information (AMI) and Fowlkes-Mallows Index (FMI) scores of the four Chamfer-linkage variants ($\Ch$, $\Ch_N$, $\Ch_S$ and $\Ch_{NS}$) versus baseline classical linkage functions.
The best score for each dataset is shown in \textcolor{Green}{\underline{\textbf{green bold}}}. Additionally, whenever a Chamfer variant outperforms all baselines on a dataset, it is shown in \textcolor{Blue}{\textbf{blue bold}}.
}
\label{tab:quality-evals_other}
\small
\begin{tabular}{cc|ccccc|cccc}
\toprule
\multicolumn{1}{l}{} &
  Dataset &
  Average &
  Centroid &
  Complete &
  Single &
  Ward's &
  $\Ch$ &
  $\Ch_N$ &
  $\Ch_S$ &
  $\Ch_{NS}$\\
\midrule
\multirow{18}{*}{\begin{sideways}AMI\end{sideways}} 
     & {\tt iris}       & 0.732 & \best{0.803} & 0.679 & 0.732 & 0.767 & \best{0.803} & \best{0.803} & 0.76 & 0.78 \\
     & {\tt wine}       & 0.424 & 0.424 & \best{0.436} & 0.39 & 0.424 & 0.424 & 0.424 & 0.425 & 0.413 \\
     & {\tt cancer}     & 0.454 & 0.425 & 0.425 & 0.325 & 0.421 & \best{0.464} & 0.389 & 0.423 & 0.436 \\
     & {\tt digits}     & 0.836 & 0.79 & 0.692 & 0.794 & 0.867 & \best{0.89} & \better{0.888} & 0.825 & 0.817 \\
     & {\tt faces}      & 0.729 & 0.684 & 0.66 & 0.754 & 0.727 & 0.754 & 0.746 & 0.738 & \best{0.762} \\
     & {\tt coil-20}    & 0.777 & 0.784 & 0.73 & \best{0.901} & 0.824 & 0.888 & 0.836 & 0.856 & 0.806 \\
     & {\tt coil-100}   & 0.834 & 0.831 & 0.791 & \best{0.919} & 0.843 & 0.892 & 0.885 & 0.894 & 0.874 \\
     & {\tt alloprof} & \best{0.786} & 0.775 & 0.775 & 0.714 & 0.775 & 0.776 & 0.775 & 0.728 & 0.783 \\ 
     & {\tt usps}       & 0.678 & 0.576 & 0.541 & 0.336 & 0.735 & \best{0.818} & 0.732 & 0.728 & 0.662 \\
     & {\tt hal} & 0.387 & 0.372 & 0.371 & 0.147 & 0.396 & \best{0.411} & 0.359 & 0.369 & 0.365 \\ 
     & {\tt news} & 0.609 & 0.592 & 0.576 & 0.556 & 0.618 & 0.613 & \best{0.627} & 0.594 & 0.604 \\
     & {\tt cifar-10} & 0.699 & 0.43 & 0.565 & 0.149 & 0.746 & \best{0.762} & 0.704 & 0.691 & 0.557 \\ 
     & {\tt cifar-100} & 0.557 & 0.425 & 0.496 & 0.240 & \best{0.569} & 0.566 & 0.515 & 0.536 & 0.507 \\ 
     & {\tt mnist}      & 0.626 & 0.499 & 0.48 & 0.211 & 0.698 & \best{0.853} & \better{0.75} & \better{0.702} & 0.632 \\
     & {\tt fashion-mnist} & 0.628 & 0.534 & 0.495 & 0.221 & 0.668 & \best{0.705} & 0.602 & 0.662 & 0.565 \\ 
     & {\tt birds}      & 0.873 & 0.816 & 0.843 & 0.722 & 0.908 & \best{0.917} & 0.866 & 0.879 & 0.851 \\
     & {\tt food-101} & 0.642 & 0.450 & 0.553 & 0.235 & 0.653 & \best{0.682} & 0.617 & 0.596 & 0.565 \\ 
     & {\tt emnist} & 0.527 & 0.454 & 0.445 & 0.163 & 0.536 & \best{0.640} & \better{0.543} & \better{0.548} & 0.529 \\ 
\cmidrule{2-11}
     & {\bf Geomean}    & 0.639 & 0.570 & 0.571 & 0.389 & 0.657 & \best{0.694} & 0.647 & 0.644 & 0.619 \\
\midrule 
\multirow{18}{*}{\begin{sideways}FMI\end{sideways}} 
     & {\tt iris}       & 0.771 & 0.841 & 0.769 & 0.821 & 0.822 & 0.841 & \best{0.847} & 0.814 & 0.83\\
     & {\tt wine}       & 0.645 & 0.645 & 0.637 & 0.617 & 0.645 & 0.645 & 0.645 & 0.645 & \best{0.67} \\
     & {\tt cancer}     & \best{0.797} & 0.776 & 0.777 & 0.78 & 0.739 & 0.792 & 0.761 & 0.77 & 0.783 \\
     & {\tt digits}     & 0.825 & 0.753 & 0.577 & 0.812 & 0.86 & \better{0.887} & \best{0.888} & 0.786 & 0.815 \\
     & {\tt faces}      & 0.615 & 0.583 & 0.531 & 0.661 & 0.629 & \better{0.664} & 0.64 & 0.637 & \best{0.665} \\
     & {\tt coil-20}    & 0.623 & 0.656 & 0.591 & \best{0.859} & 0.716 & 0.792 & 0.689 & 0.778 & 0.668 \\
     & {\tt coil-100}   & 0.644 & 0.659 & 0.605 & \best{0.8} & 0.682 & 0.746 & 0.715 & 0.764 & 0.691 \\
     & {\tt alloprof} & \best{0.832} & 0.811 & 0.778 & 0.768 & 0.817 & 0.785 & 0.818 & 0.782 & 0.818 \\
     & {\tt usps}       & 0.619 & 0.553 & 0.445 & 0.488 & 0.71 & \best{0.816} & \better{0.753} & \better{0.725} & 0.641 \\
     & {\tt hal} & 0.467 & 0.453 & 0.390 & 0.390 & 0.509 & \best{0.543} & 0.390 & 0.423 & 0.440 \\
     & {\tt news} & \best{0.499} & 0.427 & 0.449 & 0.348 & 0.491 & 0.496 & 0.486 & 0.476 & 0.483 \\
     & {\tt cifar-10} & 0.732 & 0.441 & 0.445 & 0.316 & 0.720 & 0.722 & \best{0.740} & 0.628 & 0.594 \\
     & {\tt cifar-100} & 0.323 & 0.2 & 0.223 & 0.154 & 0.311 & \better{0.325} & \best{0.334} & 0.283 & 0.277 \\
     & {\tt mnist}      & 0.538 & 0.393 & 0.317 & 0.358 & 0.634 & \best{0.834} & \better{0.806} & \better{0.652} & 0.607 \\
     & {\tt fashion-mnist} & 0.573 & 0.513 & 0.368 & 0.328 & 0.603 & \best{0.649} & \better{0.615} & \better{0.605} & 0.526 \\
     & {\tt birds}      & 0.65 & 0.593 & 0.59 & 0.502 & \best{0.752} & 0.75 & 0.635 & 0.692 & 0.613 \\
     & {\tt food-101} & 0.483 & 0.218 & 0.275 & 0.182 & 0.457 & \best{0.484} & 0.478 & 0.387 & 0.359 \\
     & {\tt emnist} & 0.274 & 0.224 & 0.146 & 0.146 & 0.28 & \best{0.429} & \better{0.405} & \better{0.334} & \better{0.307} \\
\cmidrule{2-11}
     & {\bf Geomean}    & 0.583 & 0.498 & 0.455 & 0.450 & 0.607 & \best{0.656} & \better{0.624} & 0.595 & 0.572\\
\bottomrule
\end{tabular}
\end{table*}

\begin{table}[h]
\centering
\vspace{-1em}
\caption{\small The Adjusted Mutual Information (AMI) and the Fowlkes-Mallows Index (FMI) scores of chamfer- and approximate centroid-linkage on {\tt reddit} and {\tt covertype}. The best quality score for each dataset is in bold and underlined.
}
\label{tab:large-evals_other}
\vspace{-0.5em}
\small
\begin{tabular}{cc|c|c}
\toprule
\multicolumn{1}{l}{} &
  Dataset &
  $\text{Centroid}_{0.1}$ &
  $\Ch$ \\
\midrule
\multirow{2}{*}{\begin{sideways}AMI\end{sideways}} 
     & {\tt reddit}       & 0.426 & \best{0.521}\\
     & {\tt covertype}       & 0.163 & \best{0.18}\\
\midrule 
\multirow{2}{*}{\begin{sideways}FMI\end{sideways}} 
     & {\tt reddit}       & 0.144 & \best{0.396} \\
     & {\tt covertype}       & \best{0.614} & \best{0.614}\\
\bottomrule
\end{tabular}
\vspace{-1.5em}
\end{table}

\subsection{Quality Evaluation}\label{app:quality_evals}
\myparagraph{Evaluation Metrics} 
We evaluate clustering quality against ground-truth labels using the standard metrics:
\emph{Adjusted Rand Index (ARI)}, \emph{Normalized Mutual Information (NMI)},
\emph{Adjusted Mutual Information (AMI)}, and \emph{Fowlkes--Mallows Index (FMI)}.
Given a dendrogram $T$ with merge costs $w(\cdot)$ on internal nodes, we report, for each method, the \emph{best} score attained over the family of clusterings obtained along the following three merge orders:

\begin{enumerate}
  \item \textbf{Merge-order.} The prefix sequence of clusterings produced by the algorithm itself (from $n$ singletons down to one cluster). These are the clusterings after each merge.

  \item \textbf{Least-available-merge-first.} Starting from all singletons, repeatedly expose the currently smallest-cost merge in the dendrogram, independent of the algorithm’s original merge order; again, we evaluate the clustering after each exposed merge.

  \item \textbf{Monotonicized least-available-merge-first.} Some linkages (e.g., Centroid and Chamfer variants) are non-monotone, so $w$ can decrease along a root-to-leaf path. We ``monotonicize'' by assigning each internal node
  $$
  \widetilde{w}(u)=\sum_{v\in \mathrm{subtree}(u)} w(v),
  $$
  which strictly increases toward the root. We then apply the least-available-merge-first traversal using $\widetilde{w}$ and evaluate the clustering after each exposed merge.
\end{enumerate}

For ARI, NMI, and FMI, the reported score is the \emph{maximum} over all clusterings visited along these three orders. We compute these scores incrementally, starting with the trivial clustering of all singletons, to finally a single cluster with everything, updating the scores as we proceed. This takes $O(n)$ time in total, per ordering.

AMI calculations are more complex: it requires the complex expected mutual information term, which prevents the same constant-time incremental update. For AMI, we therefore, evaluate \emph{thresholded cuts} using a geometric schedule of thresholds (merge costs spaced multiplicatively), which closely approximates the maximum while keeping the total time near-linear. We provide implementations of these optimized evaluation metrics along with the rest of the code in the supplementary material.

\myparagraph{Results} We described the ARI and NMI results in detail in Section~\ref{sec:experiments}. Table~\ref{tab:quality-evals_other} summarizes the results for AMI and FMI scores. For the large datasets, the AMI and FMI scores are presented in Table~\ref{tab:large-evals_other}.

\end{document}